\newcommand{\reb}[1]{{ #1}}
\newcommand{\E}[2]{\mathbb{E}_{#1}{\left[#2\right]}}
\newcommand{\mpc}{{\pi_{H,\hat{V}}}}
\newtheorem{lemma}{Lemma}
\newtheorem{assumption}{Assumption}
\newtheorem{corollary}{Corollary}
\newtheorem{theorem}{Theorem}
\newtheorem{theoremr}{Theorem}
\DeclareMathOperator*{\argmax}{argmax}
\algnewcommand{\algorithmicforeach}{\textbf{for}}
\title{Learning Off-Policy with Online Planning}
\author{
  Harshit Sikchi\thanks{Currently at The University of Texas at Austin, Email: hsikchi@utexas.edu},\, Wenxuan Zhou,\, David Held\\
  Robotics Institute\\
 Carnegie Mellon University \\
  \texttt{\{hsikchi, wenxuanz, dheld\}@cs.cmu.edu} \\
  %% examples of more authors
%   \And
%   Wenxuan Zhou \\
%   Robotics Institute\\
%  Carnegie Mellon University \\
%   \texttt{wenxuanz@cs.cmu.edu} \\
%   \AND
%   David Held \\
%   Robotics Institute\\
%  Carnegie Mellon University \\
%   \texttt{dheld@cs.cmu.edu} \\
  %% \And
  %% Coauthor \\
  %% Affiliation \\
  %% Address \\
  %% \texttt{email} \\
  %% \And
  %% Coauthor \\
  %% Affiliation \\
  %% Address \\
  %% \texttt{email} \\
}
\begin{document}

\maketitle

\vspace{-8mm}
\begin{abstract} Reinforcement learning (RL) in low-data and risk-sensitive domains requires performant and flexible deployment policies that can readily incorporate constraints during deployment. One such class of policies are the semi-parametric H-step lookahead policies, which select actions using trajectory optimization over a dynamics model for a fixed horizon with a terminal value function. In this work, we investigate a novel instantiation of H-step lookahead with a~\textit{learned} model and a terminal value function learned by a~\textit{model-free off-policy} algorithm, named Learning Off-Policy with Online Planning (LOOP). We provide a theoretical analysis of this method, suggesting a tradeoff between model errors and value function errors and empirically demonstrate this tradeoff to be beneficial in deep reinforcement learning.
Furthermore, we identify the ``Actor Divergence’' issue in this framework and propose Actor Regularized Control (ARC), a modified trajectory optimization procedure. We evaluate our method on a set of robotic tasks for Offline and Online RL and demonstrate improved performance. We also show the flexibility of LOOP to incorporate safety constraints during deployment with a set of navigation environments. We demonstrate that LOOP is a desirable framework for robotics applications based on its strong performance in various important RL settings. Project video and details can be found at \href{https://hari-sikchi.github.io/loop}{hari-sikchi.github.io/loop}.

\keywords{Reinforcement Learning, Trajectory Optimization, Safety}

\end{abstract}

\section{Introduction}

\begin{wrapfigure}{r}{0.5\textwidth}
\vspace{-5mm}
\centering
\includegraphics[width=0.5\textwidth]{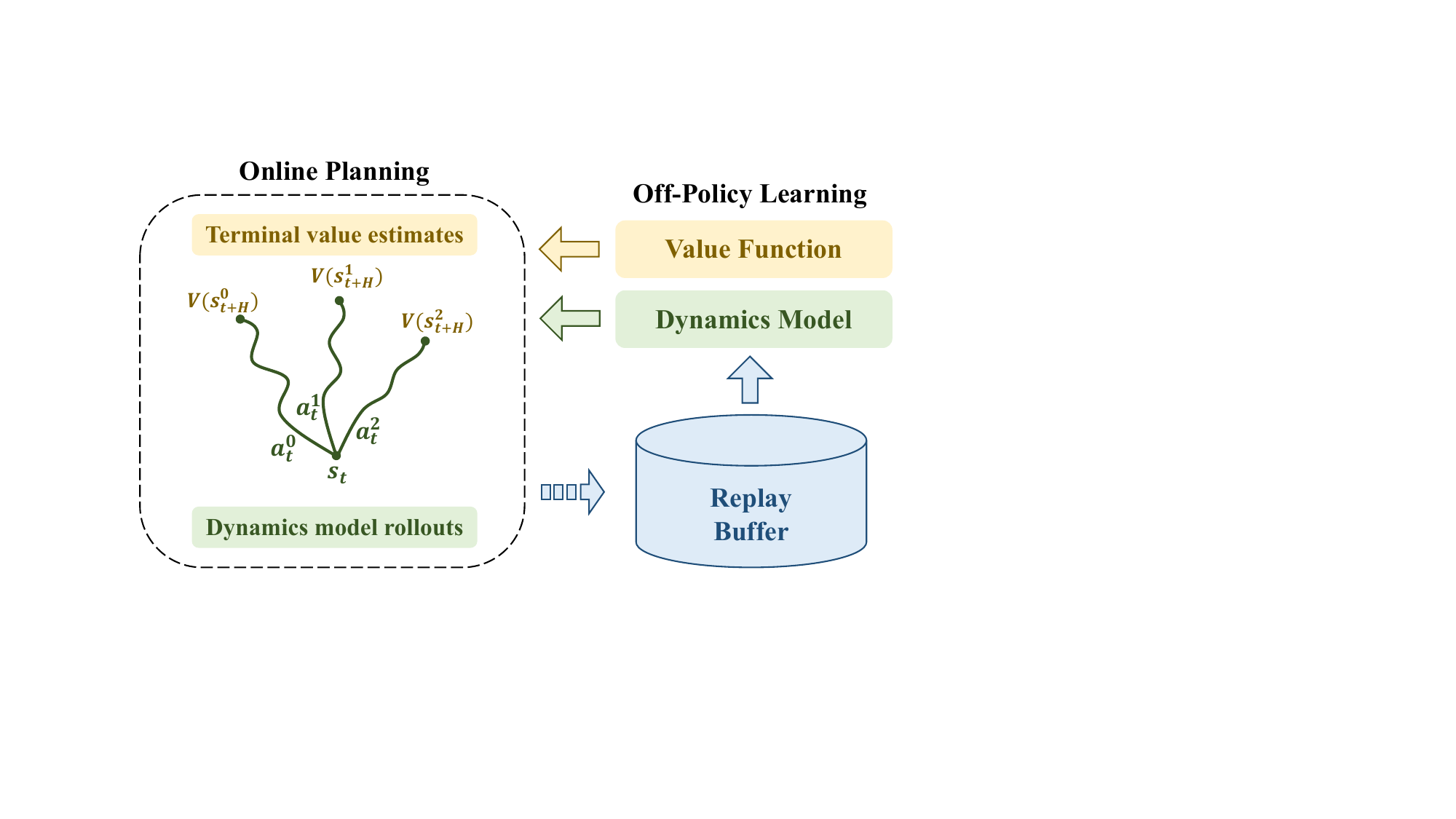}
\caption{Overview of LOOP: A learned dynamics model is utilized for Online Planning with a terminal value function. The value function is learned via a model-free off-policy algorithm.}
\label{fig:loop_overview}
\end{wrapfigure} 

Off-policy reinforcement learning algorithms have been widely used in many robotic applications due to their sample efficiency and their ability to incorporate data from different sources~\cite{kalashnikov2021mt,haarnoja2018learning,matas2018sim,kalashnikov2018qt}.
Model-free off-policy algorithms sample transitions from a replay buffer to learn a value function and then update the policy according to the value function~\cite{haarnoja2018soft, fujimoto2018addressing}. Thus, the performance of the policy is highly dependent on the estimation of the value function. However, learning an accurate value function from off-policy data is challenging especially in deep RL due to a variety of issues, such as overestimation bias~\cite{thrun1993issues,fujimoto2018off}, delusional bias~\cite{lu2018non}, rank loss~\cite{kumar2020implicit}, instability~\cite{fu2019diagnosing}, and divergence~\cite{achiam2019towards}. 
Another shortfall of model-free off-policy algorithms in continuous control is that the policy is usually parametrized by a feedforward neural network which lacks flexibility during deployment. 

Previous works in model-based RL have explored different ways of using a dynamics model to improve off-policy algorithms~\cite{janner2019trust,rajeswaran2020game,feinberg2018model,buckman2018sample, chua2018deep}. One way of incorporating the dynamics model is to use H-step lookahead policies~\cite{efroni2020online}. At each timestep, H-step lookahead policies rollout the dynamics model H-step into the future from the current state to find an action sequence with the highest return. Within this trajectory optimization process, a terminal value function is attached to the end of the rollouts to provide an estimation of the return beyond the fixed horizon. This way of online planning offers us a degree of explainability missing in fully parametric methods while also allowing us to take constraints into account during deployment.
Previous work proves faster convergence with H-step lookahead policies in tabular setting~\citep{efroni2020online} or showed improved sample complexity with a \textit{ground-truth} dynamics model~\citep{lowrey2018plan}. However, the benefit of H-step lookahead policies remains unclear under an \textit{approximate model} and an \textit{approximate value function}. Additionally, if H-step lookahead policies are used during the value function update~\cite{lowrey2018plan}, the required computation of value function update will be significantly increased.

In this work, we take this direction further by studying H-step lookahead both theoretically and empirically with three main contributions.  \textbf{First}, we provide a theoretical analysis of H-step lookahead under an \textit{approximate model} and \textit{approximate value function}.  Our analysis suggests a trade-off between model error and value function error, and we empirically show that this tradeoff can be used to improve policy performance in Deep RL. \textbf{Second}, we introduce Learning Off-Policy with Online Planning (LOOP) (Figure~\ref{fig:loop_overview}). To avoid the computational overhead of performing trajectory optimization while updating the value function as in previous work~\cite{lowrey2018plan}, the value function of LOOP is updated via a parameterized actor using a model-free off-policy algorithm (``Learning Off-Policy''). LOOP exploits the benefits of H-step lookahead policies when the agent is deployed in the environment during exploration and evaluation (``Online Planning''). This novel combination of model-based online planning and model-free off-policy learning provides sample-efficient and computationally-efficient learning. We also identify the ``Actor Divergence" issue in this combination and propose a modified trajectory optimization method called Actor Regularized Control (ARC). ARC performs implicit divergence regularization with the parameterized actor through Iterative Importance Sampling. 

\textbf{Third}, we explore the flexibility of H-step lookahead policies for improved performance  in offline RL and safe RL, which are both important settings in robotics. 
LOOP can be applied on top of various offline RL algorithms to improve their evaluation performance. LOOP's semiparameteric behavior policy also allows it to easily incorporate safety constraints during deployment. We evaluate LOOP on a set of simulated robotic tasks including locomotion, manipulation, and controlling an RC car. We show that LOOP provides significant improvement in performance for online RL, offline RL, and safe RL, which makes it a strong choice of RL algorithm for robotic applications.

\section{Related Work}

\textbf{Model-based RL} Model-based reinforcement learning (MBRL) methods learn a dynamics model and use it to optimize the policy. State-of-the-art model-based RL methods usually have better sample efficiency compared to model-free methods while maintaining competitive asymptotic performance~\cite{kurutach2018model,janner2019trust}. One approach in MBRL is to use trajectory optimization with a learned dynamics model~\cite{chua2018deep, nagabandi2019deep,deisenroth2011pilco}. These methods can reach optimal performance when a large enough planning horizon is used. However, they are limited by not being able to reason about the rewards beyond the planning horizon. Increasing the planning horizon increases the number of trajectories that need to be sampled and incurs a heavy computational cost.

Various attempts have been made to combine model-free and model-based RL. GPS~\cite{levine2013guided} combines trajectory optimization using analytical models with the on-policy policy gradient estimator. MBVE~\cite{feinberg2018model} and STEVE~\cite{buckman2018sample} use the model to improve target value estimates.  Approaches such as MBPO~\cite{janner2019trust} and MAAC~\cite{clavera2020model} follow Dyna-style~\cite{sutton1991dyna} learning where imagined short-horizon trajectories are used to provide additional transitions to the replay buffer leveraging model generalization. \citet{piche2018probabilistic} use Sequential Monte Carlo (SMC) to capture multimodal policies. The SMC policy relies on combining multiple 1-step lookahead value functions to sample a trajectory proportional to the unnormalized probability $\text{exp}(\sum_{i=1}^H(A(s,a)))$; this approach potentially compounds value function errors, in contrast to LOOP which uses single H-step lookahead planning for each state. POLO~\cite{lowrey2018plan} shows advantages of trajectory optimization under \textit{ground-truth} dynamics with a terminal value function. The value function updates involve additional trajectory optimization routines which is one of the issues we aim to address with LOOP. The computation of trajectory optimization in POLO is $\mathcal{O}(THN)$ while LOOP is $\mathcal{O}(TH)$ where $T$ is the number of environment timesteps, $H$ is the planning horizon, and $N$ is the number of samples needed for training the value function.

\textbf{Off-Policy RL} LOOP relies on a terminal value function for long horizon reasoning which can be learned effectively via model-free off-policy RL algorithms. Off-policy RL methods such as SAC~\cite{haarnoja2018soft} and TD3~\cite{fujimoto2018addressing} use the replay buffer to learn a Q-function that evaluates a parameterized actor and then optimize the actor by maximizing the Q-function. Off-policy methods can be modified to be used for Offline RL problems where the goal is to learn a policy from a static dataset~\cite{agarwal2020optimistic,fujimoto2018off,zhang2020gendice,siegel2020keep,levine2020offline,kumar2020conservative,zhou2020plas}. MBOP~\cite{argenson2020model}, a recent model-based offline RL method, leverages planning with a terminal value function, but the value function is a Monte Carlo evaluation of truncated replay buffer trajectories, whereas in LOOP the value function is trained for optimality under the dataset.
\section{Preliminaries}
A Markov Decision Process (MDP) is defined by the tuple $(\mathcal{S}, \mathcal{A}, p, r, \rho_0)$ with state-space $\mathcal{S}$, action-space $\mathcal{A}$, transition probability $p(s_{t+1}| s_t, a_t)$, reward function $r(s,a)$, and initial state distribution $\rho_0(s)$. In the infinite horizon discounted MDP, the goal of reinforcement learning algorithms is to maximize the return for policy $\pi$ given by  $J^\pi=\E{a_t\sim\pi(s_t),s_0 \sim \rho_0}{\sum_{t=0}^\infty\gamma^t r(s_t,a_t)}$.

\textbf{Value functions:} $V^\pi$ : $\mathcal{S}\rightarrow  \mathbb{R}$ represents a state-value function which estimates the return from the current state $s_t$ and following policy $\pi$, defined as $ V^\pi(s)=\E{a_t\sim\pi(s_t)}{\sum_{t=0}^\infty\gamma^t r(s_t,a_t)|s_0=s}$. Similarly, $Q^\pi$ : $\mathcal{S}\times\mathcal{A}\rightarrow  \mathbb{R}$ represents a action-value function, usually referred as a Q-function, defined as $Q^\pi(s,a)=\E{a_t\sim\pi(s_t)}{\sum_{t=0}^\infty\gamma^t r(s_t,a_t)|s_0=s,a_0=a}$.
Value functions corresponding to the optimal policy $\pi^*$ are defined to be $V^*$ and $Q^*$. The value function can be updated according to the Bellman operator $\mathcal{T}$:
\begin{equation}
        \mathcal{T}{Q}(s_t,a_t) = r(s_t,a_t)+\E{s_{t+1}\sim p,a_{t+1}\sim\pi_Q}{\gamma( Q(s_{t+1},a_{t+1})}
\end{equation}
where $\pi_{Q}$ is updated to be greedy with respect to $Q$, the current Q-function.

\textbf{Constrained MDP for safety:} A constrained MDP (CMDP) is defined by the tuple $(\mathcal{S}, \mathcal{A}, p, r,c, \rho_0)$ with an additional cost function $c(s,a)$. We define the cumulative cost of a policy to be $D^{\pi}=\E{a_t\sim\pi(s_t),s_0\sim\rho_0}{\sum_{t=0}^\infty \gamma^t c(s_t,a_t)}$. A common objective for safe reinforcement learning is to find a policy $\pi= \text{argmax}_{\pi}{J^{\pi}}$ subject to $D^\pi\le d_0$ where $d_0$ is a safety threshold~\cite{altman1999constrained}.

\section{H-step Lookahead with Learned Model and Value Function}

Model-based algorithms often learn an approximate dynamics model $\hat{M}(s_{t+1}|s_t,a_t)$ using the data collected from the environment. One way of using the model is to find an action sequence that maximizes the cumulative reward with the learned model using trajectory optimization~\cite{williams2016aggressive,rubinstein1999cross, camacho2013model}. An important limitation of this approach is that the computation grows exponentially with the planning horizon. Thus, methods like~\cite{williams2016aggressive,chua2018deep,nagabandi2019deep, wang2019exploring, zhang2021importance} plan over a fixed, short horizon and are unable to reason about long-term reward. Let $\pi_{H}$ be such a fixed horizon policy: 
\begin{align}
    \label{eq:fixed-horizon-policy}
     \pi_{H}(s_0)=&\argmax_{a_0}\max_{a_1,..,a_{H-1}}\E{ 
     \hat{M}}{R_{H}(s_0,\tau)}~,\text{where } R_{H}(s_0,\tau)=\sum_{t=0}^{H-1} \gamma^t r(s_t,a_t)   
\end{align}
where $\tau$ denotes the action sequence $a_{[0..{H-1}]}$. One way to enable efficient long-horizon reasoning is to augment the planning trajectory with a terminal value function. Given a value-function $\hat{V}$, we define a policy $\pi_{H,\hat{V}}$ obtained by maximizing the H-step lookahead objective:
\begin{align}
\label{eq:H-step_objective}
    \pi_{H,\hat{V}}(s_0)=&\argmax_{a_0}\max_{a_1,..,a_{H-1}}\E{  \hat{M}}{ R_{H,\hat{V}}(s_0,\tau)}\\
    &\text{where } R_{H,\hat{V}}(s_0,\tau)=\sum_{t=0}^{H-1} \gamma^t r(s_t,a_t)+\gamma^{H}\hat{V}(s_{H}) \nonumber
\end{align}
The quality of both the model $\hat{M}$ and the value-function $\hat{V}$ affects the performance of the overall policy.
%$\pi_{H,\hat{V}}$. 
To show the benefits of this combination of model-based trajectory optimization and the value-function, we now analyze and bound the performance of the H-step look-ahead policy $\pi_{H,\hat{V}}$ compared to its fixed-horizon counterpart without the value-function $\pi_{H}$ (Eqn.~\ref{eq:fixed-horizon-policy}), as well as the greedy policy obtained from the value-function $\pi_{\hat{V}}=\text{argmax}_{a}\E{s'\sim M(.|s,a)}{r(s,a)+\gamma \hat{V}(s')}$. Following previous work, we will construct the proofs with the state-value function $V$, but the proofs for the action-value function $Q$ can be derived similarly.

\begin{lemma}
\label{lemma:greedy_lookahead}
~(\citet{singh1994upper}) Suppose we have an approximate value function $\hat{V}$ such that $\max_s|V^*(s)-\hat{V}(s)|\le\epsilon_v$. Then the performance of the 1-step greedy policy $\pi_{\hat{V}}$ can be bounded as:\\
\vspace{-2mm}
\begin{equation}
\label{eq:greedy_lookahead}
     J^{\pi^*}-J^{\pi_{\hat{V}}}\le \frac{\gamma}{1-\gamma}[2\epsilon_v]
\end{equation}
\end{lemma}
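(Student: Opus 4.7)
The plan is to use the standard Bellman-operator contraction argument, following Singh and Yee. Let $\mathcal{T}^*$ denote the Bellman optimality operator and $\mathcal{T}^\pi$ the Bellman operator for policy $\pi$, both of which are $\gamma$-contractions in the sup norm. The key observation is that the greedy policy $\pi_{\hat{V}}$ is characterized by the identity $\mathcal{T}^{\pi_{\hat{V}}} \hat{V} = \mathcal{T}^* \hat{V}$, i.e., acting greedily with respect to $\hat V$ realizes the optimality operator applied to $\hat V$. Since the goal is to bound the scalar gap $J^{\pi^*} - J^{\pi_{\hat V}}$, and $J^\pi = \mathbb{E}_{s_0 \sim \rho_0}[V^\pi(s_0)]$, it suffices to bound the sup-norm gap $\|V^* - V^{\pi_{\hat V}}\|_\infty$.

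First I would insert and subtract $\mathcal{T}^{\pi_{\hat V}} \hat V$ inside $V^* - V^{\pi_{\hat V}}$ and rewrite using $V^* = \mathcal{T}^* V^*$ and $V^{\pi_{\hat V}} = \mathcal{T}^{\pi_{\hat V}} V^{\pi_{\hat V}}$, yielding
\begin{equation*}
V^* - V^{\pi_{\hat V}} \;=\; \bigl(\mathcal{T}^* V^* - \mathcal{T}^* \hat V\bigr) \;+\; \bigl(\mathcal{T}^{\pi_{\hat V}} \hat V - \mathcal{T}^{\pi_{\hat V}} V^{\pi_{\hat V}}\bigr),
\end{equation*}
where the middle two terms cancel by the greedy identity $\mathcal{T}^* \hat V = \mathcal{T}^{\pi_{\hat V}} \hat V$. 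Applying the sup-norm $\gamma$-contraction to each bracket gives $\|\mathcal{T}^* V^* - \mathcal{T}^* \hat V\|_\infty \le \gamma \epsilon_v$ and $\|\mathcal{T}^{\pi_{\hat V}} \hat V - \mathcal{T}^{\pi_{\hat V}} V^{\pi_{\hat V}}\|_\infty \le \gamma \|\hat V - V^{\pi_{\hat V}}\|_\infty$.

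Next I would use the triangle inequality to couple $\|\hat V - V^{\pi_{\hat V}}\|_\infty \le \epsilon_v + \|V^* - V^{\pi_{\hat V}}\|_\infty$, substitute back, and solve the resulting self-referential inequality
\begin{equation*}
\|V^* - V^{\pi_{\hat V}}\|_\infty \;\le\; \gamma \epsilon_v + \gamma \epsilon_v + \gamma \|V^* - V^{\pi_{\hat V}}\|_\infty,
\end{equation*}
which rearranges to $\|V^* - V^{\pi_{\hat V}}\|_\infty \le \tfrac{2\gamma \epsilon_v}{1-\gamma}$. Finally, taking expectations over $s_0 \sim \rho_0$ converts the sup-norm bound into the desired bound on $J^{\pi^*} - J^{\pi_{\hat V}}$.

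There is no real obstacle here beyond bookkeeping; the only subtle step is recognizing that the greedy identity $\mathcal{T}^{\pi_{\hat V}} \hat V = \mathcal{T}^* \hat V$ is what makes the cross terms vanish, so that one never needs to compare $\pi_{\hat V}$ directly to $\pi^*$ action-by-action. If one instead tried to bound $V^* - V^{\pi_{\hat V}}$ by unrolling the policy trajectories, the argument would be messier; the contraction-plus-greedy-identity route keeps everything in two lines and yields the stated constant $\tfrac{2\gamma}{1-\gamma}$ sharply.
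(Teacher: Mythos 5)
Your proof is correct: the greedy identity $\mathcal{T}^{\pi_{\hat V}}\hat V=\mathcal{T}^*\hat V$, the two applications of the $\gamma$-contraction, the triangle-inequality coupling $\|\hat V-V^{\pi_{\hat V}}\|_\infty\le\epsilon_v+\|V^*-V^{\pi_{\hat V}}\|_\infty$, and the rearrangement to $\|V^*-V^{\pi_{\hat V}}\|_\infty\le\frac{2\gamma\epsilon_v}{1-\gamma}$ are all sound, and passing to $J$ via $\rho_0$ is immediate. However, your route is not the one the paper takes: the paper does not prove this lemma at all; it cites Singh and Yee and, at the end of the appendix, observes that the lemma falls out of Theorem~\ref{thm:h_step_thm} as the special case $\epsilon_m=0$, $H=1$. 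That theorem is proved by an entirely different, trajectory-level argument: decompose $J^{\pi^*}-J^{\pi_{H,\hat V}}$ into the gap over the first $H$ steps plus a discounted tail term $\gamma^H\,\E{}{V^*(s_H)-V^{\pi_{H,\hat V}}(s_H)}$, swap $V^*$ for $\hat V$ at the horizon at a cost of $2\gamma^H\epsilon_v$, control the model mismatch via a state-visitation TV bound, and close the recursion on the tail to produce the $\frac{1}{1-\gamma^H}$ factor. Your operator-contraction argument is cleaner and sharper for this special case --- it never needs to compare trajectories or invoke a visitation bound --- but it does not obviously extend to an approximate model or to $H>1$ lookahead, which is precisely why the paper works at the trajectory level. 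The two approaches are complementary: yours is the canonical two-line proof of the classical bound; the paper's recursion is the machinery that generalizes it.
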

\vspace{-2mm}
\begin{theorem}
\label{thm:h_step_thm}
(H-step lookahead policy) Suppose $\hat{M}$ is an approximate dynamics model with Total Variation distance bounded by $\epsilon_m$. Let $\hat{V}$ be an approximate value function such that $\max_s|V^*(s)-\hat{V}(s)|\le\epsilon_v$. Let the reward function $r(s,a)$ be bounded by [0,$R_{\text{max}}$] and $\hat{V}$ be bounded by [0,$V_{\text{max}}$]. \reb{Let $\epsilon_p$ be the suboptimality incurred in H-step lookahead optimization (Eqn.~\ref{eq:H-step_objective})}. Then the performance of the H-step lookahead policy $\pi_{H,\hat{V}}$ can be bounded as:\\
\vspace{-2mm}
\begin{equation}
\label{eq:lookahead_bound}
     J^{\pi^*}-J^{\pi_{H,\hat{V}}}\le \frac{2}{1-\gamma^H}[C(\epsilon_m,H,\gamma)\reb{+\frac{\epsilon_p}{2}}+\gamma^H\epsilon_v]
\end{equation}
\vspace{-2mm}
where 
\vspace{-2mm}
\begin{equation*}
    C(\epsilon_m,H,\gamma)=R_{\max} \sum_{t=0}^{H-1}\gamma^t t \epsilon_m + \gamma^H H\epsilon_mV_{\text{max}}
\end{equation*}
\vspace{-5mm}

\end{theorem}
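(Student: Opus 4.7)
The plan is to mimic the classical approximate-policy-iteration error analysis but with the $H$-step Bellman operator replacing the usual one-step operator, and to introduce a "model-simulation" lemma that quantifies how the approximate dynamics $\hat{M}$ distorts the $H$-step objective. Concretely, I will define the operator
\begin{equation*}
(T_H^{M} V)(s) = \max_{a_0,\dots,a_{H-1}} \mathbb{E}_{s_0 = s,\, s_{t+1} \sim M(\cdot|s_t,a_t)}\!\left[\sum_{t=0}^{H-1}\gamma^t r(s_t,a_t) + \gamma^H V(s_H)\right],
\end{equation*}
and its approximate counterpart $T_H^{\hat{M}}$ under $\hat{M}$. The first step is to observe that $T_H^{M}$ is a $\gamma^H$-contraction in $\|\cdot\|_\infty$ whose fixed point is $V^*$ (a short telescoping argument using the $H$-step composition of the usual Bellman optimality operator).

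The second, and main, step will be a simulation-style lemma showing that for any $V$ bounded in $[0,V_{\max}]$,
\begin{equation*}
\left\|T_H^{\hat{M}} V - T_H^{M} V\right\|_\infty \;\le\; R_{\max}\sum_{t=0}^{H-1}\gamma^t t\,\epsilon_m \;+\; \gamma^H H\,\epsilon_m V_{\max} \;=\; C(\epsilon_m,H,\gamma).
\end{equation*}
To establish this, I would propagate the per-step Total Variation bound $\epsilon_m$ along the $H$-step rollout: a standard coupling/union-bound argument yields $\|\Pr_{\hat{M}}(s_t = \cdot) - \Pr_M(s_t = \cdot)\|_{\mathrm{TV}} \le t\,\epsilon_m$ for any fixed action sequence. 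Pairing this with the bound on rewards gives the first sum, and pairing it with the bound $V_{\max}$ on the terminal value function gives the $\gamma^H H \epsilon_m V_{\max}$ term; taking the max over action sequences preserves the bound since it is uniform in the policy.

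The third step assembles the pieces using the usual greedy-policy error-propagation trick. Letting $\pi = \pi_{H,\hat{V}}$ be the greedy action of $T_H^{\hat{M}}\hat{V}$, I will show
\begin{equation*}
\|V^* - V^{\pi}\|_\infty \le \frac{2}{1-\gamma^H}\left\|T_H^{\hat{M}}\hat{V} - T_H^{M} V^*\right\|_\infty,
\end{equation*}
by inserting $T_H^{M} V^{\pi}$ and $T_H^{\hat{M}} V^*$ as intermediate quantities and exploiting that $T_H^{M}$ and $T_H^{\hat{M}}$ are $\gamma^H$-contractions and that $V^* = T_H^M V^*$. Then a triangle inequality
\begin{equation*}
\left\|T_H^{\hat{M}}\hat{V} - T_H^{M} V^*\right\|_\infty \le \left\|T_H^{\hat{M}}\hat{V} - T_H^{M}\hat{V}\right\|_\infty + \left\|T_H^{M}\hat{V} - T_H^{M} V^*\right\|_\infty \le C(\epsilon_m,H,\gamma) + \gamma^H\epsilon_v,
\end{equation*}
together with $J^{\pi^*} - J^{\pi} \le \|V^* - V^{\pi}\|_\infty$, yields the claimed bound.

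The main obstacle I anticipate is the simulation lemma in step two: one has to be careful that the TV-distance bound $\epsilon_m$ is on the \emph{conditional} transition kernel while the error we need is on the \emph{joint} $H$-step trajectory distribution. Showing that the error grows only linearly in $t$ (rather than geometrically) requires the coupling argument to be applied step-by-step with the reward/value differences bounded separately, and then the $\gamma^t t$ weighting emerges from pushing the $t\epsilon_m$ state-distribution error through the discounted sum. Once this lemma is in hand, the rest of the argument is a direct adaptation of Lemma~\ref{lemma:greedy_lookahead} with $\gamma$ replaced by $\gamma^H$.
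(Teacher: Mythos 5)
Your proposal is correct and arrives at exactly the paper's bound, but it packages the argument in operator-theoretic form rather than the paper's explicit trajectory-distribution decomposition. The two key ingredients are the same in both: (i) a simulation lemma propagating the per-step TV error linearly, $\|\Pr_{\hat{M}}(s_t=\cdot)-\Pr_{M}(s_t=\cdot)\|_{\mathrm{TV}}\le t\,\epsilon_m$, which after weighting by $\gamma^t R_{\max}$ for the rewards and $\gamma^H V_{\max}$ for the terminal value yields $C(\epsilon_m,H,\gamma)$ (the paper cites the Markov-chain TVD bound of Janner et al.\ for this); and (ii) a $\gamma^H$-geometric recursion producing the $\tfrac{1}{1-\gamma^H}$ prefactor. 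Where the paper expands $J^{\pi^*}-J^{\pi_{H,\hat{V}}}$ over the first $H$ steps, swaps $V^*$ for $\hat{V}$ at the horizon (cost $2\gamma^H\epsilon_v$), compares the two $H$-step MDPs $\mathcal{H}$ and $\hat{\mathcal{H}}$ policy-by-policy (cost $2C$), and then bounds the residual $\gamma^H\,\mathbb{E}[V^*(s_H)-V^{\pi_{H,\hat{V}}}(s_H)]$ recursively, you obtain the same three quantities as, respectively, a contraction step $\|T_H^M\hat{V}-T_H^M V^*\|_\infty\le\gamma^H\epsilon_v$, the operator-level simulation bound $\|T_H^{\hat{M}}V-T_H^M V\|_\infty\le C$ applied on both sides of the triangle inequality (whence the factor $2$), and the fixed-point identity $V^*=T_H^M V^*$ combined with $\gamma^H$-contractivity. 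Your framing is arguably cleaner and makes the analogy with Lemma~1 ($H=1$, $\epsilon_m=0$) transparent. One point to be careful about when writing step three out: $V^\pi$ is the fixed point of the $H$-step \emph{policy-evaluation} operator $T_H^{M,\pi}$, not of the optimality operator $T_H^M$, so the intermediate quantities you insert must use $T_H^{M,\pi}$ on the $V^\pi$ side (and you use that $\pi$ maximizes $T_H^{\hat{M}}\hat{V}$, i.e.\ $T_H^{\hat{M},\pi}\hat{V}=T_H^{\hat{M}}\hat{V}$); with that adjustment the claimed inequality $\|V^*-V^\pi\|_\infty\le\tfrac{2}{1-\gamma^H}\bigl(C+\gamma^H\epsilon_v\bigr)$ goes through. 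Note also that the paper, like you, sidesteps the MPC/replanning subtlety by analyzing the policy that commits to all $H$ actions before replanning.
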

\vspace{-0.2cm}
\begin{proof}
Due to the page limit, we defer the proof to Appendix~\ref{ap:A1Hstep}. We also provide extension of Theorem~\ref{thm:h_step_thm} under assumptions on model generalization and concentrability in Corollary \ref{thm:h_step_model_gen} and Theorem~\ref{thm:h_step_fitted_q} respectively in Appendix~\ref{ap:theory}.
\end{proof}
\vspace{-0.35cm}
%\subsubsection{$\pi_{H,\hat{Q}}$ vs. $\pi_{H}$}
% \subsubsection{H-step Look-ahead Policy vs H-step Fixed Horizon Policy}
\textbf{H-step Lookahead Policy vs H-step Fixed Horizon Policy:}
% \hs{Why take Q to be const? For h-step policies we always assume terminal Q to be 0? It ll make the statements more clear}
The fixed-horizon policy $\pi_{H}$ can be considered as a special case of $\pi_{H, \hat{V}}$ with $\hat{V}(s)=0 ~\forall s \in \mathcal{S}$. Following Theorem \ref{thm:h_step_thm}, $\epsilon_{\hat{V}}=\max_s|V^*(s)|$ implies a potentially large optimality gap. This suggests that learning a value function that better approximates $V^*$ than $\hat{V}(s)=0$ will give us a smaller optimality gap in the worst case.

\textbf{H-step lookahead policy vs 1-step greedy policy:} By comparing Lemma~\ref{lemma:greedy_lookahead} and Theorem~\ref{thm:h_step_thm}, we observe that the performance of the H-step lookahead policy $\pi_{H,\hat{V}}$ reduces the dependency on the value function error $\epsilon_v$ at least by a factor of $\gamma^{H-1}$ while introducing an additional dependency on the model error $\epsilon_m$. This implies that the H-step lookahead is beneficial when the value-function bias dominates the bias in the learned model. In the low data regime, the value function bias can result from compounded sampling errors~\cite{agarwal2019reinforcement} and is likely to dominate the model bias, as evidenced by the success of model-based RL methods in the low-data regime~\cite{argenson2020model,matsushima2020deployment,janner2019trust}; we observe this hypothesis to be consistent with our experiments where H-step lookahead offers large gains in sample efficiency. Further, errors in value learning with function approximation can stem from a number of reasons explored in previous work, some of them being Overestimation, Rank Loss, Divergence, Delusional bias, and Instability~\cite{thrun1993issues,fu2019diagnosing,fujimoto2018addressing,kumar2019stabilizing,kumar2020implicit}.  Although this result may be intuitive to many practitioners, it has not been shown theoretically; further, we demonstrate that we can use this insight to improve the performance of state-of-the-art methods for online RL, offline RL, and safe RL.
\section{Learning Off-Policy with Online Planning}
We propose Learning Off-Policy with Online Planning (LOOP) as a framework of using H-step lookahead policies that combines online trajectory optimization with model-free off-policy RL (Figure~\ref{fig:loop_overview}).  We use the replay buffer to learn a dynamics model and a value function using an off-policy algorithm. The H-step lookahead policy (Eqn.~\ref{eq:H-step_objective}) generates rollouts using the dynamics model with a terminal value function and selects the best action for execution. The underlying off-policy algorithm is boosted by the H-step lookahead which improves the performance of the policy during both exploration and evaluation. 
From another perspective, the underlying model-based trajectory optimization is improved using a terminal value function for reasoning about future returns. 
In this section, we discuss the Actor Divergence issue in the LOOP framework and introduce additional applications and instantiations of LOOP for offline RL and safe RL.

\subsection{Reducing actor-divergence with Actor Regularized Control (ARC)}
\label{sec:actor_divergence}
As discussed above, LOOP utilizes model-free off-policy algorithms to learn a value function 
%close to $V^*$ 
in a more computationally efficient manner.  It relies on actor-critic methods which use a parametrized actor $\pi_{\theta}$ to facilitate the Bellman backup. However, we observe that combining trajectory optimization and policy learning naively will lead to an issue that we refer to as ``actor divergence": a different policy is used for data collection (H-step lookahead policy $\pi_{H, \hat{V}}$) than the policy that is used to learn the value-function (the parametrized actor $\pi_{\theta}$). 
This leads to a potential distribution shift between the state-action visitation distribution between the parametrized actor $\pi_{\theta}$ and the actual behavior policy $\pi_{H, \hat{V}}$ which can lead to accumulated bootstrapping errors with the Bellman update and destabilize value learning~\cite{kumar2019stabilizing}. One possible solution in this case is to use Offline RL~\cite{levine2020offline}; however, in practice, we observe that offline RL in this setup leads to learning instabilities.  We defer discussion on this alternative to the Appendix~\ref{ap:offline_rl_in_loop}.
Instead, we propose to resolve the actor-divergence issue via a modified trajectory optimization method called Actor Regularized Control (ARC).

In ARC, we aim to constrain the action selection of the trajectory optimization to be close to the parametrized actor. We frame the following general constrained optimization problem for policy improvement~\cite{vieillard2020leverage}:
\begin{equation}
\label{eq:ARC}
    p^\tau_{opt}=\argmax_{p^\tau} \mathbb{E}_{p^\tau} [L_{H,\hat{V}}(s_t,\tau)]~,~\textrm{s.t}~~D_{KL}(p^\tau||p^\tau_{prior})\le \epsilon
\end{equation}
where $L_{H,\hat{V}}(s_t,\tau)$ is the expected lookahead objective (Eqn.~\ref{eq:H-step_objective}) under the learned model given by $L_{H,\hat{V}}(s_t,\tau)=\E{\hat{M}}{R_{H,\hat{V}}(s_t,\tau)}$, starting from state $s_t$, $p^\tau$ is a distribution over action sequences $\tau$ of horizon H starting from $s_t$, and $p^\tau_{prior}$ is a prior distribution over such action sequences. We will use the parametrized actor to derive this prior in ARC. This optimization admits a closed form solution by enforcing the KKT conditions where the optimal policy is given by $p^\tau_{opt} \propto p^\tau_{prior} e^{\frac{1}{\eta}L_{H,\hat{V}}(s_t,\tau)}$~\cite{nair2020accelerating, peters2007reinforcement, peters2008natural,peters2010relative}, where $\eta$ is the lagrangian dual variable. The above formulation generalizes a number of prior work~\cite{haarnoja2018soft,williams2016aggressive, nair2020accelerating} (more details in Appendix~\ref{ap:arc_algo}).

Approximating the optimal policy $p^\tau_{opt}$ as a multivariate gaussian with diagonal covariance $\hat{p}^\tau_{opt}=\mathcal{N}(\mu_{opt},\sigma_{opt})$ , the parameters can be estimated using importance sampling under the proposal distribution $p^\tau_{prior}$ as:
\begin{equation}
    \hat{p}^\tau_{opt}=\mathcal{N}(\mu_{opt},\sigma_{opt})~,~\mu_{opt} = \E{\tau',\hat{M}}{\frac{p^\tau_{opt}(\tau')}{p^\tau_{prior}(\tau')}\tau'}~,~ \sigma_{opt} = \E{\tau',\hat{M}}{\frac{p^\tau_{opt}(\tau')}{p^\tau_{prior}(\tau')}(\tau'-\mu)^2}
\end{equation}
where $\tau'\sim p^\tau_{prior}$. We use iterative importance sampling to estimate $\hat{p}^\tau_{opt}$ which is parameterized as a Gaussian whose mean and variance  at iteration $m+1$ are given by the empirical estimate:
\begin{align}
    \mu^{m+1} = \frac{\sum_{i=1}^N[e^{\frac{1}{\eta} L_{H,\hat{V}}(s_t,\tau')}\tau']}{\sum_{i=1}^N e^{\frac{1}{\eta} L_{H,\hat{V}}(s_t,\tau')}}~,~\sigma^{m+1} = \frac{\sum_{i=1}^N[e^{\frac{1}{\eta} L_{H,\hat{V}}(s_t,\tau')}(\tau'-\mu^{m+1})^2]}{\sum_{i=1}^N e^{\frac{1}{\eta} L_{H,\hat{V}}(s_t,\tau')}}
\end{align}
where $\tau'\sim \mathcal{N}(\mu^m,\sigma^m)$ and $\mathcal{N}(\mu^0,\sigma^0)$ is set to $p^\tau_{prior}$. As long as we perform a finite number of iterations, the final trajectory distribution is constrained in total variation to be close to the prior as a result of finite trust region updates as shown in Lemma~\ref{thm:implicit_kl} in Appendix~\ref{ap:lemma2_proof}.

To reduce actor divergence in LOOP, we constrain the action-distribution of the trajectory optimization to be close to that of the parametrized actor $\pi_\theta$. To do so, we set $p^\tau_{prior}=\beta \pi_\theta+(1-\beta)\mathcal{N}(\mu_{t-1},\sigma)$. The trajectory prior is a mixture of the parametrized actor and the action sequence from the previous environment timestep with additional Gaussian noise $\mathcal{N}(0,\sigma)$. Using 1-timestep shifted solution from the previous timestep allows to amortize trajectory optimization over time~\cite{argenson2020model}.  For online RL, we can vary $\sigma$ to vary the amount of exploration during training. For offline RL, we set $\beta = 1$ to constrain actions to  be  close  to  those in the dataset  (from which $\pi_\theta$ is learned) to  be  more  conservative.

%%%%%%%%%%%%%%%%%%%%%%%%%%%%%%%%%%%%%%%%%%%%%%%%%%%%%%%%%%%%%%%%%%%%%%%%%%%%%%%%%%%%%%
\subsection{Additional instantiations of LOOP: Offline-LOOP and Safe-LOOP}
LOOP not only improves the performance of previous model-based and model-free RL algorithms but also shows versatility in different settings such as the offline RL setting and the safe RL setting. These potentials of H-step lookahead policies have not been explored in previous work. 

\textbf{LOOP for Offline RL:}
In offline reinforcement learning, the policy is learned from a static dataset without further data collection. We can use LOOP on top of an existing off-policy algorithm as a plug-in component to improve its test time performance by using the model-based rollouts as suggested by Theorem~\ref{thm:h_step_thm}. Note that this is different from the online setting in the previous section in which LOOP also influences exploration.
%acts a robust policy improvement while influencing exploration. 
In offline-LOOP, to account for the uncertainty in the model and the Q-function, ARC optimizes for the following uncertainty-pessimistic objective similar to~\cite{yu2020mopo,kidambi2020morel}:
\begin{equation}
\label{eq:offline_loop}
  \text{mean}_{[K]}[R_{H,\hat{V}}(s_t,\tau)] - \beta_{pess}  \text{std}_{[K]}[R_{H,\hat{V}}(s_t,\tau)]    
\end{equation}
 where $[K]$ are the model ensembles, $\beta_{pess}$ is the pessimism parameter and $R_{H,\hat{V}}$ is the H-horizon lookahead objective defined in Eqn.~\ref{eq:H-step_objective}. % Note that the ``std'' is the standard deviation of the mean outputs of the models. 

\label{safety}
\textbf{Safe Reinforcement Learning:}
Another benefit of LOOP with its semi-parameteric policy is that we can easily incorporate (possibly non-stationary) constraints
%during training as well as testing 
with the model-based rollout, while being an order of magnitude more sample efficient than existing safe model-free RL algorithms. To account for safety in the planning horizon, ARC optimizes for the following cost-pessimistic objective:
\vspace{-2.5mm}
\begin{equation}
\label{eq:safe_loop}
    \text{argmax}_{a_t} \E{\hat{M}}{R_{H,\hat{V}}(s_t,\tau)}
    \text{s.t. } \max_{[K]} \sum_{t=t}^{t+H-1}\gamma^tc(s_t,a_t)\le d_0
\end{equation}\vspace{-1mm}
where $[K]$ are the model ensembles, $c$ is the constraint cost function and $R_{H,\hat{V}}$ is the H-horizon lookahead objective defined in Eqn.~\ref{eq:H-step_objective} and $d_0$ is the constraint threshold. For each action rollout, the worst-case cost is considered w.r.t model uncertainty to be more conservative. The pseudocode for modified ARC to solve the above constrained optimization is given in Appendix~\ref{ap:safeARC_algo}.

\begin{figure*}
% \vspace{2mm}
    \centering
    \begin{subfigure}[t]{0.19\linewidth}
        \centering
        \includegraphics[width=\linewidth]{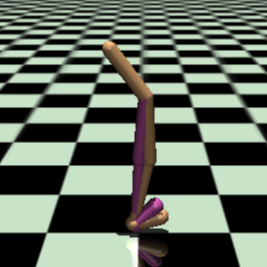}
    \end{subfigure}%
    ~ 
    \begin{subfigure}[t]{0.19\linewidth}
        \centering
        \includegraphics[width=\linewidth]{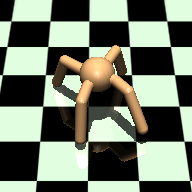}
    \end{subfigure}%
    ~ 
    \begin{subfigure}[t]{0.19\linewidth}
        \centering
        \includegraphics[width=\linewidth]{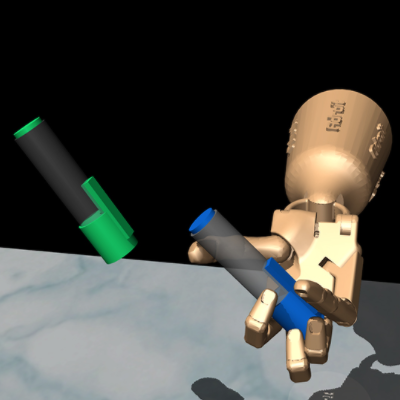}
    \end{subfigure}%
    ~ 
    \begin{subfigure}[t]{0.19\linewidth}
        \centering
        \includegraphics[width=\linewidth]{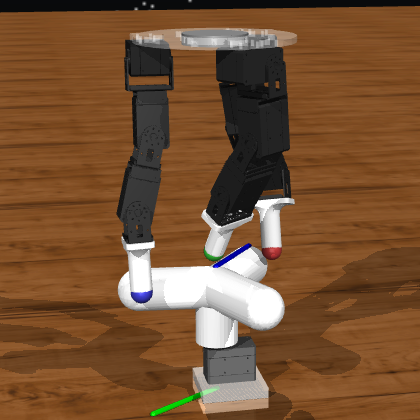}
    \end{subfigure}%
    ~ 
    \begin{subfigure}[t]{0.19\linewidth}
        \centering
        \includegraphics[width=\linewidth]{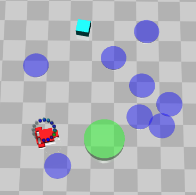}
    \end{subfigure}
    \caption{ We evaluate LOOP over a variety of environments ranging from locomotion, manipulation to navigation including Walker2d-v2, Ant-v2, PenGoal-v1,   Claw-v1,  CarGoal1, etc.}
    \vspace{-3mm}
    \label{fig:envs}
\end{figure*}
\vspace{-0.4cm}
\section{Experimental Results}
\label{sec:result}
\vspace{-2mm}

In the experiments, we evaluate the performance of LOOP combined with different off-policy algorithms in the settings of online RL, offline RL and safe RL over a variety of environments (Figure~\ref{fig:envs}). Implementation details of LOOP and the baselines can be found in Appendix~\ref{ap:exp_details}.

\subsection{LOOP for Online RL}
In this section, we evaluate the performance of LOOP for online RL on three OpenAI Gym MuJoCo~\cite{todorov2012mujoco} locomotion control tasks: {\fontfamily{qcr}\selectfont
HalfCheetah-v2, Walker-v2, Ant-v2}  and two manipulation tasks: {\fontfamily{qcr}\selectfont
PenGoal-v1, Claw-v1}. In these experiments, we use Soft Actor-Critic (SAC)~\cite{haarnoja2018soft} as the underlying off-policy method with the ARC optimizer described in Section~\ref{sec:actor_divergence}. 
%We observe that ARC-R is too restrictive for the online RL case. 
Further experiments on {\fontfamily{qcr}\selectfont InvertedPendulum-v2, Swimmer, Hopper-v2} and {\fontfamily{qcr}\selectfont Humanoid-v2} and more details on the baselines can be found in Appendix~\ref{ap:online_rl_exp} and Appendix~\ref{ap:online_rl} respectively. 

\vspace{3mm}
\begin{figure}[h]
\begin{center}
      \includegraphics[width=1.0\linewidth]{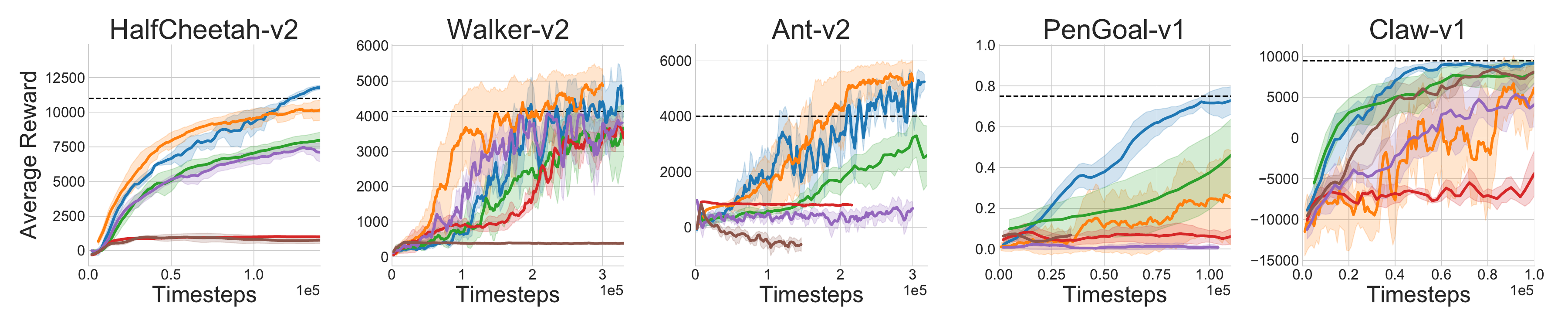}
      \\ 
    \includegraphics[width=1.0\linewidth]{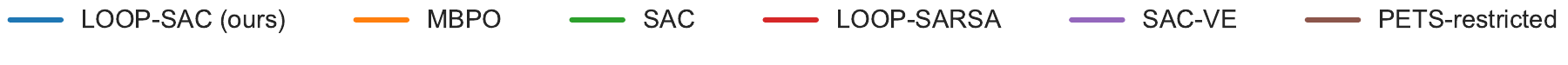}  
    \vspace{-7.2mm}
\end{center}
\caption{Comparisons of LOOP and the baselines for online RL. LOOP-SAC is significantly more sample efficient than SAC. It is competitive to MBPO for locomotion tasks and outperforms MBPO for manipulation tasks (PenGoal-v1 and Claw-v1). The dashed line indicates the performance of SAC at 1 million timesteps. Additional results on more environments can be found in Appendix~\ref{ap:online_rl_exp}. }
\label{fig:loop_main}
\end{figure}
\vspace{2mm}

\textbf{Baselines:} 
We compare the LOOP framework against the following baselines:
PETS-restricted, a variant of PETS~\cite{chua2018deep} that uses trajectory optimization (CEM) for the same horizon as LOOP but without a terminal value function. LOOP-SARSA uses a terminal value function which is an evaluation of the replay buffer policy, similar to MBOP~\cite{argenson2020model} in spirit. To compare with other ways of combining model-based and model-free RL, we also compare against MBPO~\cite{janner2019trust} and SAC-VE. MBPO leverages the learned model to generate additional transitions for value function learning. SAC-VE utilizes the model for value expansion, similar to MBVE~\cite{feinberg2018model} but uses SAC as the model-free component for a fair comparison with LOOP as done in~\cite{janner2019trust}. We do not include comparison to STEVE~\cite{buckman2018sample} or SLBO~\cite{luo2018algorithmic} as they were shown to be outperformed by MBPO, and perform poorly compared to SAC in Hopper and Walker environments ~\cite{janner2019trust}. We were unable to reproduce the results for SMC~\cite{piche2018probabilistic} due to missing implementation. We did not include POLO here due several reasons. An extended discussion can be found in Appendix~\ref{ap:polo_experiments}.

\begin{wrapfigure}{r}{0.5\textwidth}
    \includegraphics[width=1.0\linewidth]{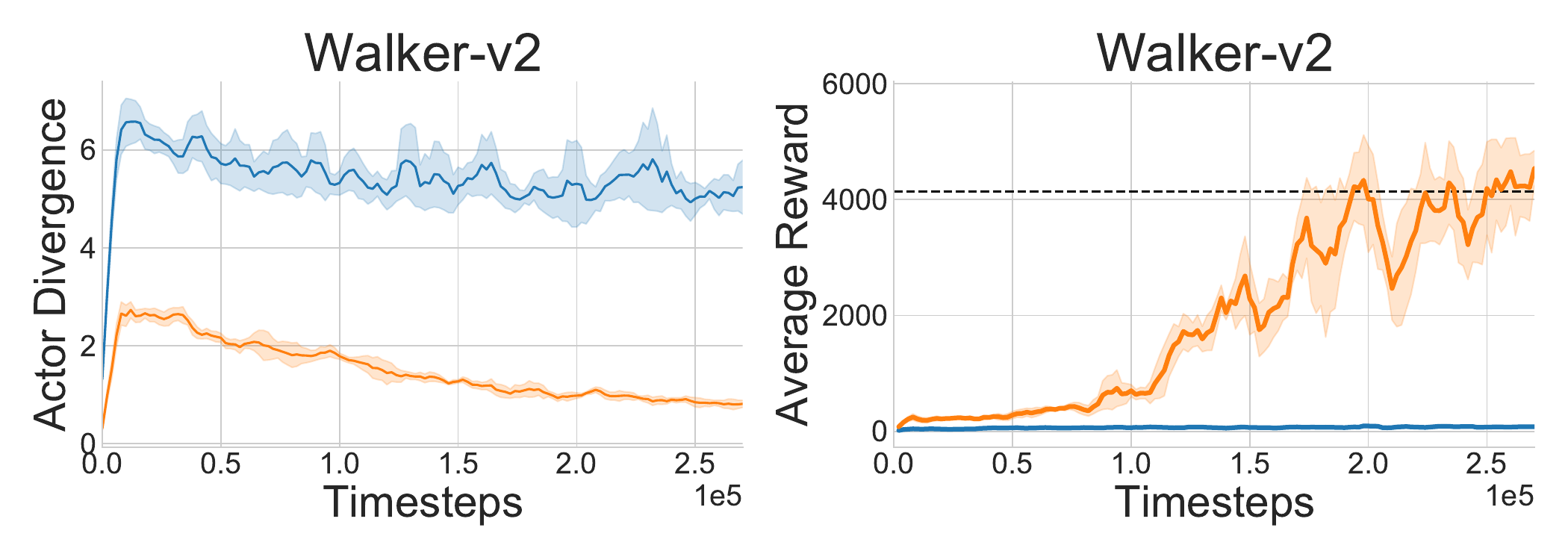}
    \includegraphics[width=1.0\linewidth]{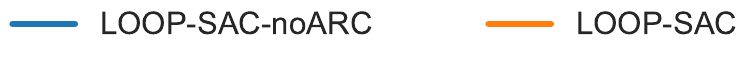}
\vspace{-5mm}
 \caption{(Left) ARC reduces the actor-divergence measured by the L2 distance between the mean of the parametrized actor and the output of the H-step lookahead policy. (Right) In absence of ARC, policy learning can be unstable.}
    \label{fig:ag_ablation_main}
\end{wrapfigure}
\textbf{Performance:} From Figure~\ref{fig:loop_main}, we observe that LOOP-SAC is significantly more sample efficient  than SAC, the underlying model-free method used to learn a terminal value function. LOOP-SAC also scales well to high-dimensional environments like {\fontfamily{qcr}\selectfont
Ant-v2} and {\fontfamily{qcr}\selectfont
PenGoal-v1}. PETS-restricted performs poorly due to myopic reasoning over a limited horizon $H$. SAC-VE and MBPO represent different ways of incorporating a model to improve off-policy learning. LOOP-SAC outperforms SAC-VE and performs competitively to MBPO, outperforming it significantly in {\fontfamily{qcr}\selectfont PenGoal-v1} and {\fontfamily{qcr}\selectfont Claw-v1}. In principle, methods like MBPO and value expansion can be combined with LOOP to potentially increase performance; we leave such combinations for future work. LOOP-SARSA has poor performance as a result of the poor value function that is trained for evaluating replay buffer policy rather than optimality. As an ablation study, we also run experiments using LOOP without ARC, which optimizes the unconstrained objective of Eqn.~\ref{eq:H-step_objective} using CEM~\cite{rubinstein1999cross}. Figure~\ref{fig:ag_ablation_main} (left) shows that ARC reduces actor-divergence effectively and Figure~\ref{fig:ag_ablation_main} (right) shows that learning performance is poor in absence of ARC for {\fontfamily{qcr}\selectfont Walker-v2}. More ablation results can be found in Appendix~\ref{ap:arc_experiments}.
\vspace{-0.4cm}
\subsection{LOOP for Offline RL}
\begin{table*}[h]
    % \vspace{2mm}
    \centering
    \footnotesize
    % \resizebox{\textwidth}{!}{%
    \begin{tabular}{c|c|c|c|c|c|c|c|c}
    \toprule
    \multirow{2}{*}{Dataset} & Env & CRR & LOOP & Improve\% & PLAS & LOOP & Improve\%& MBOP \\
    &&&CRR&&&PLAS&\\
    % \midrule
    % \multirow{3}{*}{random}& hopper&10.40&10.68&2.7&10.35&10.71&3.5&\textbf{10.8}\\
    % & halfcheetah&4.23&7.55&78.5&26.05&\textbf{26.14}&0.3&6.3\\
    % & walker2d&1.94&2.04&5.2&0.89&2.83&218.0&\textbf{8.1}\\
    \midrule
    \multirow{3}{*}{medium}& hopper&65.73&\textbf{85.83}&30.6&32.08&56.47&76.0&48.8\\
    & halfcheetah&41.14&41.54&1.0&39.33&39.54&0.5&\textbf{44.6}\\
    & walker2d&69.98&\textbf{79.18}&13.1&46.20&52.66&14.0& 41.0\\
    \midrule
    \multirow{3}{*}{med-replay}& hopper&27.69&29.08&5.0&29.29&\textbf{31.29}&6.8& 12.4\\
    & halfcheetah&42.29&42.84&1.3&43.96&\textbf{44.25}&0.7&42.3\\
    & walker2d&19.84&27.30&37.6&35.59&\textbf{41.16}&15.7&9.7\\
    % \midrule
    % \multirow{3}{*}{med-expert}& hopper&112.02&113.71&1.5&110.95&\textbf{114.32}&3.0&55.1\\
    % & halfcheetah&21.48&24.19&12.6&93.08&98.16&5.5&\textbf{105.9}\\
    % & walker2d&103.77&\textbf{105.76}&1.9&90.07&99.03&9.9&70.2\\
    \bottomrule
    \end{tabular}
\caption{Normalized scores for LOOP on the D4RL datasets comparing to the underlying offline RL algorithms and a baseline MBOP. LOOP improves the base algorithm across various types of datasets and environments. } 
\label{table:offline_rl_results}
\end{table*}

For Offline RL, we benchmark the performance using the D4RL datasets~\cite{fu2020d4rl}. We combine LOOP with two value-based offline RL algorithms: Critic Regularized Regression (CRR)~\cite{wang2020critic} and Policy in Latent Action Space (PLAS)~\cite{zhou2020plas}. We use the original offline RL algorithms to train a value function from the static data and then use it as the terminal value function for LOOP. We use $\beta = 1$ in the trajectory prior of ARC (Section~\ref{sec:actor_divergence}) in the offline RL setting to keep the policy conservative.

\textbf{Baselines:}
In addition to the underlying offline RL algorithms, we also include recent work MBOP~\cite{argenson2020model} as a baseline. MBOP uses a terminal value function which is an evaluation of the dataset policy. In contrast, LOOP uses a terminal value function trained with offline RL algorithms which is more optimal.

\textbf{Performance:} Table~\ref{table:offline_rl_results} presents the comparison of LOOP and the underlying offline RL algorithms. LOOP offers an average improvement of 15.91$\%$ over CRR and 29.49$\%$ over PLAS on the complete D4RL MuJoCo Locomotion dataset. Full results can be found in Appendix~\ref{ap:offline_rl_exp}. The results further highlight the benefit of the LOOP framework compared to the underlying model-free algorithms. 

\begin{figure*}[h]
\begin{multicols}{2}
    \includegraphics[width=1.0\linewidth]{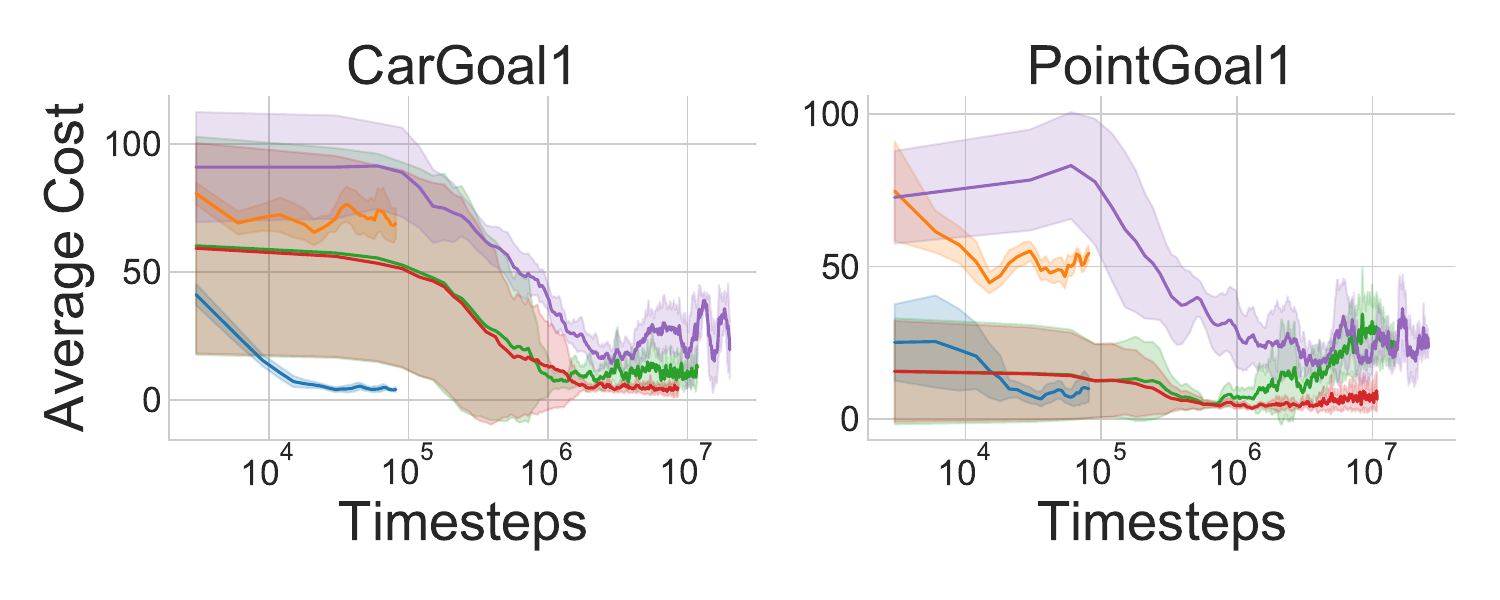}\par 
    \includegraphics[width=1.0\linewidth]{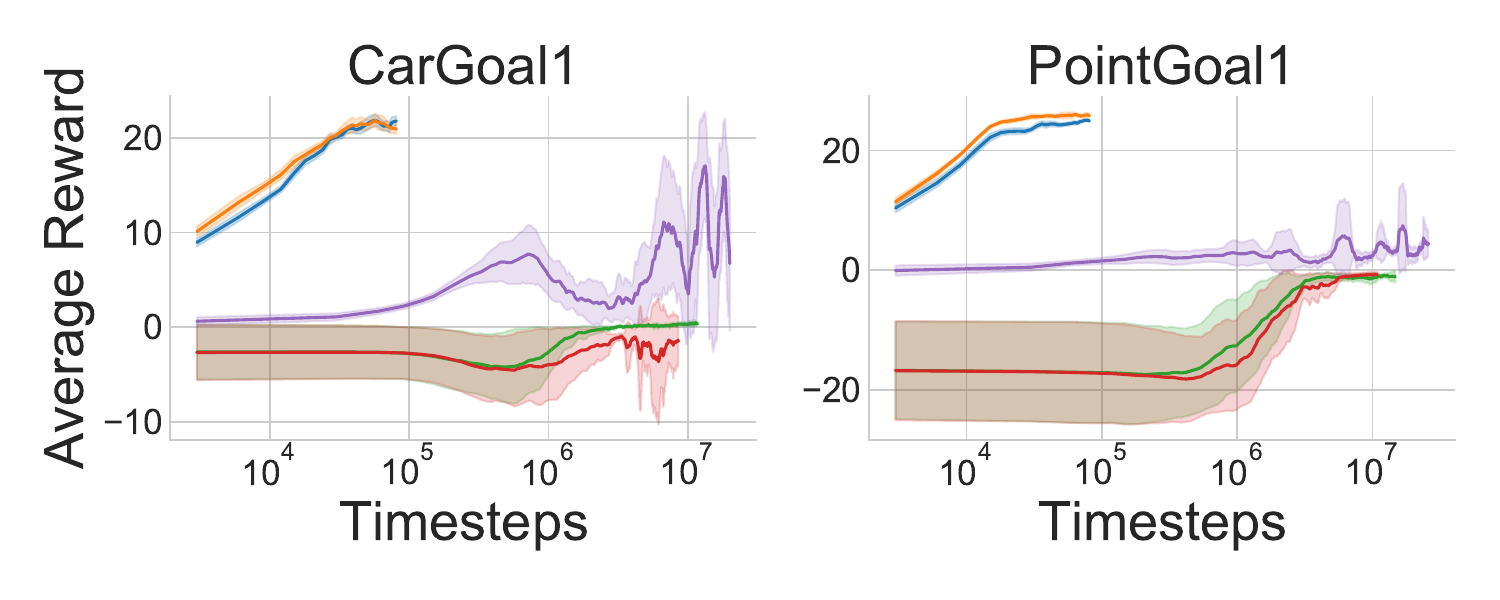} 
    \end{multicols}
    \vspace{-0.6cm}
    \begin{center}
        \includegraphics[width=1\linewidth]{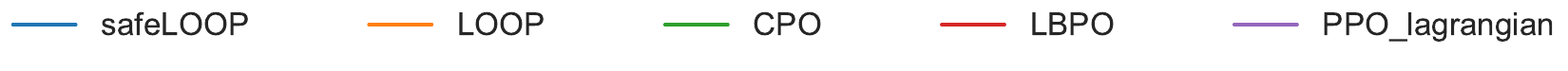}
    \end{center}
    \vspace{-5.0mm}
\caption{We compare safeLOOP with other safety methods such as CPO, LBPO, and PPO-lagrangian on OpenAI Safety Gym environments. It shows significant sample efficiency while offering similar or better safety benefits as the baselines.}
\label{fig:openai_safety}
\vspace{-5mm}
\end{figure*}
% \vspace{-2mm}
\subsection{LOOP for Safe RL}
\vspace{-2mm}
For safe RL, we modify the H-step lookahead optimization to maximize the sum of rewards while satisfying the cost constraints, as described in Section~\ref{safety}. We evaluate our method on two environments from the OpenAI Safety Gym~\citep{Ray2019} and an RC-car simulation environment~\cite{ahn2019towards}. The objective of the Safety Gym environments is to move a Point mass agent or a Car agent to the goal while avoiding obstacles. The RC-car environment is rewarded for driving along a circle of 1m fixed radius with a desired velocity while staying within the 1.2m circle during training. Details for the environments can be found in Appendix~\ref{ap:safe_rl}.

\begin{wrapfigure}{r}{0.5\textwidth}
\vspace{-7mm}
\begin{multicols}{2}
    \includegraphics[width=1.05\linewidth]{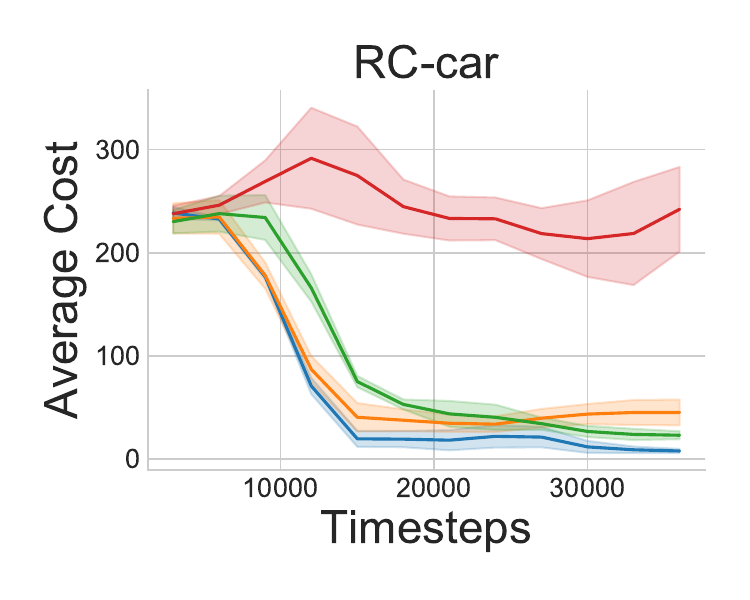}\par
    \includegraphics[width=1.05\linewidth]{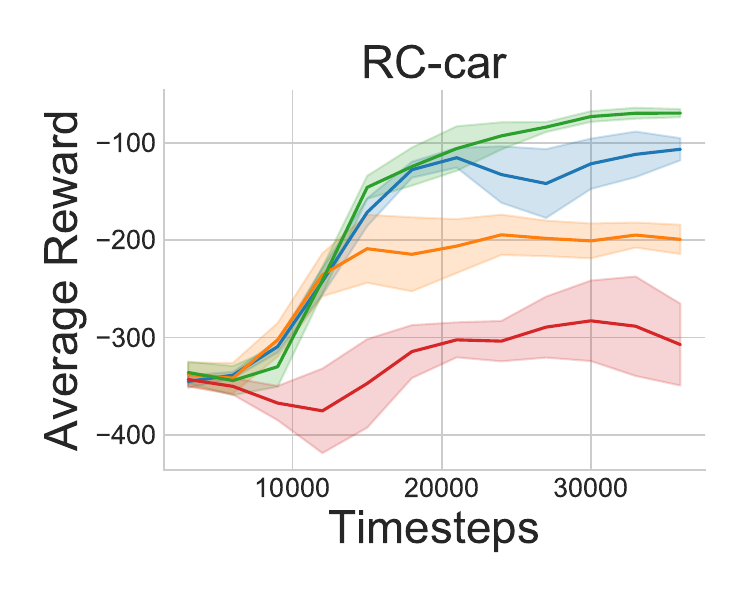}\par 
    \end{multicols}
    \vspace{-0.5cm}
     \includegraphics[width=1.0\linewidth]{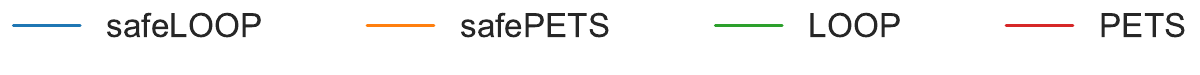}   
\caption{RC-car experiments show the importance of the terminal value function in the LOOP framework. SafeLOOP achieves higher returns than safePETS while being competitive in safety performance. Both safePETS and PETS fail to learn a drifting policy due to limited lookahead.}
\vspace{2mm}
\label{fig:safety_rc_car}
\end{wrapfigure}

\textbf{Baselines}: We compare our safety-augmented LOOP (safeLOOP) against various state-of-the-art safe learning methods such as CPO~\cite{achiam2017constrained}, LBPO~\cite{sikchi2021lyapunov}, and PPO-lagrangian~\cite{altman1996constrained,Ray2019}. CPO uses a trust region update rule that guarantees safety. LBPO relies on a barrier function formulated around a Lyapunov constraint for safety. PPO-lagrangian uses dual gradient descent to solve the constrained optimization.  
To ensure a fair comparison, all policies and dynamics models are randomly initialized, as is commonly done in safe RL experiments (rather than starting from a safe initial policy). We additionally compare against a model-based safety method that modifies PETS for safe exploration (safePETS) without the terminal value function.  We mostly compare to model-free baselines due to a lack of safe model-based Deep-RL baselines in the literature.

\textbf{Performance:}
For the OpenAI Safety Gym environments, we observe in Figure~\ref{fig:openai_safety} that safeLOOP can achieve performant yet safe policies in a sample efficient manner. SafeLOOP reaches a higher reward than CPO, LBPO and PPO-lagrangian, while being orders of magnitude faster. SafeLOOP also achieves a policy with a lower cost faster than the baselines. 
From another aspect, the simulated RC-car experiments demonstrate the benefits of the terminal value function in safe RL. Figure~\ref{fig:safety_rc_car} shows the performance of LOOP, safeLOOP, PETS, and safePETS on this domain. PETS~\cite{chua2018deep} and safePETS do not consider a terminal value function. SafeLOOP is able to achieve high performance while maintaining the fewest constraint violations during training. Qualitatively, LOOP and safeLOOP are able to learn a safe drifting behavior, whereas PETS and safePETS fail to do so since drifting requires longer horizon reasoning beyond the fixed planning horizon in PETS. The results suggest that safeLOOP is a desirable choice of algorithm for safe RL due to its sample efficiency and the flexibility of incorporating constraints during deployment.

\vspace{-4mm}
\section{Conclusion} 
\label{sec:conclusion}
\vspace{-2mm}
In this work we analyze the H-step lookahead method under a learned model and value function and demonstrate empirically that it can lead to many benefits in deep reinforcement learning. We propose a framework LOOP which removes the computational overhead of trajectory optimization for value function update. We identify the actor-divergence issue in this framework and propose a modified trajectory optimization procedure - Actor Regularized Control. We show that the flexibility of H-step lookahead policy allows us to improve performance in online RL, offline RL as well as safe RL and this makes LOOP a strong choice of RL algorithm for robotic applications.

\newpage
\acknowledgments{We thank Tejus Gupta, Xingyu Lin and the members of R-PAD lab for insightful discussions. This material is based upon work supported by the United States Air Force and DARPA under Contract No. FA8750-18-C-0092, LG Electronics,
and the National Science Foundation under Grant No. IIS-1849154.}
\bibliography{main}
\newpage
\onecolumn

\appendix
\section{Theory}
\label{ap:theory}

\subsection{H-step lookahead with approximation error}
\label{ap:A1Hstep}

We aim to show that H-step model-based lookahead policies are more robust to certain types of approximation errors than 1-step greedy policies given an approximate value function. We restate Theorem 1 here for convenience and then provide a proof.

\begin{theoremr}
\label{thm:h_step_thm}
(H-step lookahead policy) Suppose $\hat{M}$ is an approximate dynamics model such that $\max_{s,a}D_{TV}\left(M(.|s,a),\hat{M}(.|s,a)\right)\le \epsilon_m$. Let $\hat{V}$ be an approximate value function such that $\max_s|V^*(s)-\hat{V}(s)|\le\epsilon_v$. Let the reward function by bounded in [0,$R_{\text{max}}$] and $\hat{V}$ be bounded in [0,$V_{\text{max}}$]. \reb{Let $\epsilon_p$ be the suboptimality incurred in H-step lookahead optimization (Eqn.~\ref{eq:H-step_objective}) such that $J^*-\hat{J}\le \epsilon_p$, where $J^*$ is the optimal return for the H-step optimization and $\hat{J}$ is the result of the suboptimal H-step optimization.} Then the performance of the H-step lookahead policy $\pi_{H,\hat{V}}$ can be bounded as:\\
\begin{equation*}
\label{eq:lookahead_bound}
     J^{\pi^*}-J^{\pi_{H,\hat{V}}}\le \frac{2}{1-\gamma^H}[C(\epsilon_m,H,\gamma)+\reb{\frac{\epsilon_p}{2}}+\gamma^H\epsilon_v]
\end{equation*}
where 
\begin{equation*}
    C(\epsilon_m,H,\gamma)=R_{\max} \sum_{t=0}^{H-1}\gamma^t t \epsilon_m + \gamma^H H\epsilon_mV_{\text{max}}
\end{equation*}

\end{theoremr}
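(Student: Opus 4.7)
The plan is to reduce the theorem to two separate estimates: one controlling how far the approximate H-step Bellman backup $\mathcal{T}_H^{\hat{M}}\hat{V}$ sits from the exact backup $\mathcal{T}_H^{M}V^*$, and one translating that operator-level error into a policy-value loss with the advertised $2/(1-\gamma^H)$ prefactor. First I would define, for any bounded $U:\mathcal{S}\to\mathbb{R}$ and transition kernel $P$,
\[
\mathcal{T}_H^{P}U(s) = \max_{a_0,\ldots,a_{H-1}}\mathbb{E}_{s_{t+1}\sim P(\cdot|s_t,a_t)}\!\left[\sum_{t=0}^{H-1}\gamma^t r(s_t,a_t) + \gamma^H U(s_H)\right],
\]
and note that $V^*$ is a fixed point of $\mathcal{T}_H^{M}$ (the H-step Bellman optimality equation). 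By construction, $\pi_{H,\hat{V}}$ is the first action of the argmax in $\mathcal{T}_H^{\hat{M}}\hat{V}$.

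Next I would bound $\|\mathcal{T}_H^{M}V^* - \mathcal{T}_H^{\hat{M}}\hat{V}\|_\infty$ by interposing the hybrid $\mathcal{T}_H^{\hat{M}}V^*$ and using $|\max f - \max g|\le \max|f-g|$. The value-function swap contributes the $\gamma^H\epsilon_v$ term via nonexpansion of the H-step backup on its terminal argument. The model swap uses a standard simulation lemma: rolling a fixed action sequence for $t$ steps under $\hat{M}$ versus $M$ incurs trajectory TV divergence at most $t\epsilon_m$ (induction on $t$ using subadditivity of TV under composition), so the per-step reward expectation differs by at most $\gamma^t t\epsilon_m R_{\max}$, and the terminal-value expectation differs by at most $\gamma^H H\epsilon_m V_{\max}$. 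Summing reproduces $\|\mathcal{T}_H^{M}V^* - \mathcal{T}_H^{\hat{M}}\hat{V}\|_\infty \le C(\epsilon_m,H,\gamma) + \gamma^H\epsilon_v$ exactly as written in the theorem.

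Finally, I would convert operator error to policy loss. Writing $\tilde{V} = \mathcal{T}_H^{\hat{M}}\hat{V}$ and $\delta = \|V^* - V^{\pi_{H,\hat{V}}}\|_\infty$, I telescope
\[
V^* - V^{\pi_{H,\hat{V}}} = (V^* - \tilde{V}) + (\tilde{V} - V^{\pi_{H,\hat{V}}}),
\]
where $V^{\pi_{H,\hat{V}}}$ is the true return under $M$ (analyzed either directly via its H-step Bellman identity under the true model, or via a full-commitment variant that lower bounds the receding-horizon policy). The first term is bounded by Step 2. For the second, I repeat the simulation-lemma accounting along the planned action sequence, but now the terminal-value mismatch $\gamma^H\mathbb{E}_{\hat{M}}[\hat{V}] - \gamma^H\mathbb{E}_{M}[V^{\pi_{H,\hat{V}}}]$ splits via the triangle inequality into a model term bounded by $\gamma^H H\epsilon_m V_{\max}$ and a value term bounded by $\gamma^H(\epsilon_v + \delta)$. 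Summing the two pieces yields the recursion $\delta \le 2(C(\epsilon_m,H,\gamma) + \gamma^H\epsilon_v) + \gamma^H\delta$, which rearranges to the claimed $2/(1-\gamma^H)$ amplification.

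The main obstacle is Step 3: getting the constants right when converting the operator error into a policy loss. Two subtleties stand out: (i) one must carefully express $V^{\pi_{H,\hat{V}}}$ via an H-step identity under the true model so that $\gamma^H$, not $\gamma$, serves as the contraction factor that closes the recursion; and (ii) the hybrid-model telescoping must produce $C(\epsilon_m,H,\gamma)$ cleanly on each of the two pieces so that the numerator comes out as exactly $2$ rather than some larger constant. Once the simulation lemma is stated with the right precision, the remaining algebra is mechanical.
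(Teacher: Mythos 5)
Your argument is correct and reaches the stated constants, but it is organized differently from the paper's proof. The paper never introduces the H-step Bellman operator: it compares three trajectory distributions in the \emph{true} MDP --- generated by $\pi^*_{\mathcal{M}}$, by the optimal policy $\pi^*_{\mathcal{H}}$ of the true H-step MDP with terminal reward $\hat{V}$, and by the optimal policy $\pi^*_{\hat{\mathcal{H}}}$ of the approximate H-step MDP. The factor of $2$ there arises from two symmetric sources: two one-sided swaps of $V^*$ for $\hat{V}$ (giving $2\gamma^H\epsilon_v$) and the two-sided policy-transfer bound $J_{\mathcal{H}}^{\pi^*_{\mathcal{H}}}-J_{\mathcal{H}}^{\pi^*_{\hat{\mathcal{H}}}}\le 2\max_{\pi}|J_{\mathcal{H}}^{\pi}-J_{\hat{\mathcal{H}}}^{\pi}|$, with the latter controlled by the Markov-chain TV bound of Janner et al. Your version instead splits $V^*-V^{\pi_{H,\hat{V}}}$ into a planning error $V^*-\mathcal{T}_H^{\hat{M}}\hat{V}$ (using that $V^*$ is a fixed point of $\mathcal{T}_H^{M}$) and an execution error $\mathcal{T}_H^{\hat{M}}\hat{V}-V^{\pi_{H,\hat{V}}}$, each contributing one copy of $C+\gamma^H\epsilon_v$, with the $\gamma^H\delta$ recursion hidden in the second piece; the paper's recursion instead appears explicitly as the term $\gamma^H\,\mathbb{E}_{\hat{\tau}}[V^*(s_H)-V^{\pi_{H,\hat{V}}}(s_H)]$. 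The two routes use the same three ingredients (value swap, open-loop simulation lemma, $\gamma^H$-contraction), and both adopt the same non-replanning simplification; yours is arguably cleaner because the operator viewpoint makes the nonexpansion and $|\max f-\max g|\le\max|f-g|$ steps one-liners, while the paper's trajectory formulation is what lets it generalize to the visitation-based assumptions of Corollary~\ref{thm:h_step_model_gen} and Theorem~\ref{thm:h_step_fitted_q}. One small fix: in your Step~2 you interpose $\mathcal{T}_H^{\hat{M}}V^*$, so the model swap is performed with terminal function $V^*$ and yields $\gamma^H H\epsilon_m\|V^*\|_\infty$ rather than $\gamma^H H\epsilon_m V_{\max}$ ($V_{\max}$ bounds $\hat{V}$, not $V^*$, in the theorem's hypotheses); interposing $\mathcal{T}_H^{M}\hat{V}$ instead --- swapping the terminal function first and the model second --- gives exactly the stated $C(\epsilon_m,H,\gamma)$.
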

\begin{proof}
Assume we have an $\epsilon_v$-approximate value function i.e $\|\hat{V}-V^*\|_\infty < \epsilon_v$ and we have an approximate transition model which satisfies $D_{TV}\left(M(.|s,a),\hat{M}(.|s,a)\right)\le \epsilon_m$ , similar to assumptions in~\cite{lowrey2018plan,efroni2020online}. We analyze the optimality gap of the policy which uses an H-step lookahead optimization (Eqn.~\ref{eq:H-step_objective}) with this approximate model and value function. First, we define some useful notations: let $\mathcal{M}$ be the MDP defined by $(\mathcal{S}, \mathcal{A}, M, r, s_0)$ which uses the ground truth dynamics $M$, state space $\mathcal{S}$, action space $\mathcal{A}$, reward function $r$ and starting state $s_0$, and let $\hat{\mathcal{M}}$ be the MDP defined by $(\mathcal{S}, \mathcal{A}, \hat{M}, r, s_0)$ which uses the approximate dynamics model $\hat{M}$. Correspondingly, let $\mathcal{H}$ be an H-step finite horizon MDP given by $(\mathcal{S}, \mathcal{A}, M, r_{\text{mix}}, s_0)$ and let $\hat{\mathcal{H}}$ be an H-step finite horizon MDP given by $(\mathcal{S}, \mathcal{A}, \hat{M}, r_{\text{mix}}, s_0)$ where 

\begin{equation}
    r_{\text{mix}}(s_t,a_t) =  \begin{cases}
    r(s,a) &\text{if $t<H$}\\
   \hat{V}(s_H) &\text{if $t=H$}
\end{cases}\\
\end{equation}
We redefine $\pi_{H,\hat{V}}$ to be the policy obtained by repeatedly optimizing for the H-step lookahead objective (Eqn.~\ref{eq:H-step_objective}) in $\hat{\mathcal{H}}$ and acting for H steps in $\mathcal{M}$. We do not consider the MPC setting for simplicity in proof i.e. the policy does not perform any replanning after taking its initial actions. We will use $\pi^*_{\mathcal{K}}$ denote the optimal policy for some MDP $\mathcal{K}$. Let $\hat{\tau}$ denote an H-step trajectory sampled by running $\pi^*_{\hat{\mathcal{H}}}$ in $\mathcal{M}$  and similarly $\tau$ is used to denote an H-step trajectory sampled by running $\pi^*_{\mathcal{H}}$ in $\mathcal{M}$. Let $\tau^*$ denote the H-step trajectory sampled by running $\pi^*_{\mathcal{M}}$ in $\mathcal{M}$. Let $p_{\hat{\tau}}$, $p_\tau$ and $p_{\tau^*}$ be the corresponding trajectory distributions. The performance gap we want to upper bound is given by:
\begin{align}
&J^{\pi^*}-J^{\pi_{H,\hat{V}}} = V^*(s_0) - V^{\pi_{H,\hat{V}}} (s_0)\\
    &=\E{\tau^*\sim p_{\tau^*}}{\sum\gamma^t r(s_t,a_t) +\gamma^H V^*(s_H)} - \E{\hat{\tau}\sim p_{\hat{\tau}}}{\sum\gamma^tr(s_t,a_t) +\gamma^H V^{\pi_{H,\hat{V}}}(s_H)}\\
    &=\E{\tau^*\sim p_{\tau^*}}{\sum\gamma^tr(s_t,a_t) +\gamma^H V^*(s_H)} -\E{\hat{\tau}\sim p_{\hat{\tau}}}{\sum\gamma^tr(s_t,a_t) +\gamma^H V^*(s_H)}\\&+\E{\hat{\tau}\sim p_{\hat{\tau}}}{\sum\gamma^tr(s_t,a_t) +\gamma^H V^*(s_H)}- \E{\hat{\tau}\sim p_{\hat{\tau}}}{\sum\gamma^tr(s_t,a_t) +\gamma^H V^{\pi_{H,\hat{V}}}(s_H)}\\
    \label{eq:ineq0}
    &= \E{\tau^*\sim p_{\tau^*}}{\sum\gamma^tr(s_t,a_t) +\gamma^H V^*(s_H)} -\E{\hat{\tau}\sim p_{\hat{\tau}}}{\sum\gamma^tr(s_t,a_t)+\gamma^H V^*(s_H)}\\&+ \gamma^H\E{\hat{\tau}\sim p_{\hat{\tau}}}{V^*(s_H)-V^{\pi_{H,\hat{V}}}(s_H)}
\end{align}

Since we have $|V^*(s)-\hat{V}(s)|\le\epsilon_v~\forall s$, we can bound the following expressions:
\begin{align}
\label{eq:ineq1}
    &\E{\tau^*\sim p_{\tau^*}}{\sum\gamma^tr(s_t,a_t) +\gamma^H V^*(s_H)} \le \E{\tau^*\sim p_{\tau^*}}{\sum\gamma^tr(s_t,a_t) +\gamma^H \hat{V}(s_H)}+\gamma^H\epsilon_v\\
\label{eq:ineq2}
    &\E{\hat{\tau}\sim p_{\hat{\tau}}}{\sum\gamma^tr(s_t,a_t) +\gamma^H V^*(s_H)} \ge \E{\hat{\tau}\sim p_{\hat{\tau}}}{\sum\gamma^tr(s_t,a_t) +\gamma^H \hat{V}(s_H)}-\gamma^H \epsilon_v
\end{align}

Subtracting these two inequalities (\ref{eq:ineq1} and \ref{eq:ineq2}), we get:
\begin{align}
\label{eq:ineq3}
    \E{\tau^*\sim p_{\tau^*}}{\sum\gamma^tr(s_t,a_t) +\gamma^H V^*(s_H)} -\E{\hat{\tau}\sim p_{\hat{\tau}}}{\sum\gamma^tr(s_t,a_t) +\gamma^H V^*(s_H)} \\\le\E{\tau^*\sim p_{\tau^*}}{\sum\gamma^tr(s_t,a_t) +\gamma^H \hat{V}(s_H)} \nonumber -\E{\hat{\tau}\sim p_{\hat{\tau}}}{\sum\gamma^tr(s_t,a_t) +\gamma^H \hat{V}(s_H)} + 2\gamma^H\epsilon_v 
\end{align}

Substituting Eqn.~\ref{eq:ineq3} into Eqn.~\ref{eq:ineq0} we can bound the performance gap as follows:
    %  &=\E{\tau^*\sim p_{\tau^*}}{\sum\gamma^tr(s_t,a_t) +\gamma^H V^*(s_H)} - \E{\hat{\tau}\sim p_{\hat{\tau}}}{\sum\gamma^tr(s_t,a_t) +\gamma^H V^{\pi_{H,\hat{V}}}(s_H)}\nonumber\\
\begin{align}
&J^{\pi^*}-J^{\pi_{H,\hat{V}}}=V^*(s_0) - V^{\pi_{H,\hat{V}}} (s_0)\nonumber\\
     &\le \E{\tau^*\sim p_{\tau^*}}{\sum\gamma^tr(s_t,a_t) +\gamma^H \hat{V}(s_H)}  -\E{\hat{\tau}\sim p_{\hat{\tau}}}{\sum\gamma^tr(s_t,a_t) +\gamma^H \hat{V}(s_H)} \\&+ 2\gamma^H\epsilon_v \nonumber + \gamma^H\E{\hat{\tau}\sim p_{\hat{\tau}}}{V^*(s_H)-V^{\pi_{H,\hat{V}}}(s_H)}\\
     &= \E{\tau^*\sim p_{\tau^*}}{\sum\gamma^tr(s_t,a_t) +\gamma^H \hat{V}(s_H)}-\E{\tau \sim p_{\tau}}{\sum\gamma^tr(s_t,a_t) +\gamma^H \hat{V}(s_H)}\\&+\E{\tau\sim p_{\tau}}{\sum\gamma^tr(s_t,a_t) +\gamma^H \hat{V}(s_H)} \nonumber -\E{\hat{\tau}\sim p_{\hat{\tau}}}{\sum\gamma^tr(s_t,a_t) +\gamma^H \hat{V}(s_H)} \\&+ 2\gamma^H\epsilon_v + \gamma^H\E{\hat{\tau}\sim p_{\hat{\tau}}}{V^*(s_H)-V^{\pi_{H,\hat{V}}}(s_H)}\\   
     \label{eq:ineq8}
     &\le \E{\tau\sim p_{\tau}}{\sum\gamma^tr(s_t,a_t) +\gamma^H \hat{V}(s_H)}  -\E{\hat{\tau}\sim p_{\hat{\tau}}}{\sum\gamma^tr(s_t,a_t) +\gamma^H \hat{V}(s_H)} \\&+ 2\gamma^H\epsilon_v + \gamma^H\E{\hat{\tau}\sim p_{\hat{\tau}}}{V^*(s_H)-V^{\pi_{H,\hat{V}}}(s_H)}
\end{align}
The last step is due to the fact that $\tau$ is generated by the optimal action sequence in the \textit{ground-truth} H-step MDP $\mathcal{H}$ as defined earlier which implies that $\E{\tau^*\sim p_{\tau^*}}{\sum\gamma^tr(s_t,a_t) +\gamma^H \hat{V}(s_H)}\le\E{\tau \sim p_{\tau}}{\sum\gamma^tr(s_t,a_t) +\gamma^H \hat{V}(s_H)}$ 
% \dave{which implies that $x < y$}.

Now we aim to characterize the performance gap between an optimal policy of MDP $\hat{\mathcal{H}}$ , $\pi^*_{\hat{\mathcal{H}}}$, with the optimal policy of  MDP $\mathcal{H}$, $\pi^*_{\mathcal{H}}$, evaluating both in the ground truth MDP $\mathcal{H}$.  We wish to characterize this performance gap as a function of model errors and value errors $f(\epsilon_m,\epsilon_v,\gamma, H)$:. 
\begin{align*}
    \E{\tau\sim p_{\tau}}{\sum\gamma^tr(s_t,a_t) +\gamma^H \hat{V}(s_H)}  -\E{\hat{\tau}\sim p_{\hat{\tau}}}{\sum\gamma^tr(s_t,a_t) +\gamma^H \hat{V}(s_H)}\le f(\epsilon_m,\epsilon_v,\gamma, H)
\end{align*}
Let $J^{\pi}_{\mathcal{H}}$ denote the performance of policy $\pi$ when evaluated in MDP ${\mathcal{H}}$ starting from same initial state $s_0$. Then we can write this performance gap as
\begin{align}
    & \E{\tau\sim p_{\tau}}{\sum\gamma^t r(s_t,a_t) +\gamma^H \hat{V}(s_H)}  -\E{\hat{\tau}\sim p_{\hat{\tau}}}{\sum\gamma^t r(s_t,a_t) +\gamma^H \hat{V}(s_H)} \label{eq:not_recurrence_term}\\
    &= J_{\mathcal{H}}^{\pi^*_{\mathcal{H}}} - J_{\mathcal{H}}^{\pi^*_{\hat{\mathcal{H}}}}\\
    &= J_{\mathcal{H}}^{\pi^*_{\mathcal{H}}} - J_{\hat{\mathcal{H}}}^{\pi^*_{\mathcal{H}}}+  J_{\hat{\mathcal{H}}}^{\pi^*_{\mathcal{H}}} -J_{\hat{\mathcal{H}}}^{\pi^*_{\hat{\mathcal{H}}}}+J_{\hat{\mathcal{H}}}^{\pi^*_{\hat{\mathcal{H}}}}   - J_{\mathcal{H}}^{\pi^*_{\hat{\mathcal{H}}}}\\
    &= \left( J_{\mathcal{H}}^{\pi^*_{\mathcal{H}}} - J_{\hat{\mathcal{H}}}^{\pi^*_{\mathcal{H}}}\right) - \left(J_{\mathcal{H}}^{\pi^*_{\hat{{\mathcal{H}}}}}-J_{\hat{{\mathcal{H}}}}^{\pi^*_{\hat{\mathcal{H}}}}\right) + \left(J_{\hat{\mathcal{H}}}^{\pi^*_{\mathcal{H}}} - J_{\hat{\mathcal{H}}}^{\pi^*_{\hat{\mathcal{H}}}} \right)\\
    &\le \left( J_{\mathcal{H}}^{\pi^*_{\mathcal{H}}} - J_{\hat{\mathcal{H}}}^{\pi^*_{\mathcal{H}}}\right) - \left(J_{\mathcal{H}}^{\pi^*_{\hat{{\mathcal{H}}}}}-J_{\hat{{\mathcal{H}}}}^{\pi^*_{\hat{{\mathcal{H}}}}}\right)\reb{+\epsilon_p}\\
    \label{eq:ineq4}
    &\le 2 \max_{\pi\in \{\pi^*_{\mathcal{H}},\pi^*_{\hat{\mathcal{H}}}\}}| \left( J_{\mathcal{H}}^{\pi} - J_{\hat{{\mathcal{H}}}}^{\pi}\right)| \reb{+\epsilon_p}
\end{align}

\reb{The second-to-last equation is due to the assumed suboptimality of H-step lookahead planner where we have $\forall$ policies $\pi$, $J_{\hat{{\mathcal{H}}}}^{\pi^*_{\hat{{\mathcal{H}}}}}+\epsilon_p\ge J_{\hat{{\mathcal{H}}}}^\pi$ .}
% \dave{What is $\pi^*_{\mathcal{H}}$?  I don't recall that being defined earlier.}
Since the total variation between $M$ and $\hat{M}$ is at most $\epsilon_m$, i.e $D_{TV}\left(M(.|s,a),\hat{M}(.|s,a)\right)\le \epsilon_m$, we have that $|\rho_1^t(s,a)-\rho_2^t(s,a)|\le t\epsilon_m$, where $\rho_1(s,a)$ is the discounted state-action visitation induced by $\pi$ on $\mathcal{H}$, $\rho_2(s,a)$ is the discounted state-action visitation induced by the same policy on $\hat{\mathcal{H}}$ and superscript $t$ indicates the state-action marginal at the $t^{th}$ timestep (for proof see Lemma B.2 Markov Chain TVD Bound~\cite{janner2019trust}).  Then we can write the performance of policy $\pi$ in terms of its induced state marginal and the reward function, i.e $J_{\mathcal{H}}^\pi=\sum_{s,a}\rho_1(s,a) r_{\text{mix}}(s,a) =\sum_{s,a}\sum_{t=0}^H\gamma^t\rho_1^t(s,a)r_{\text{mix}}(s,a)$ and use the Markov chain TVD bound:
\begin{align}
J_{\mathcal{H}}^\pi - J_{\hat{{\mathcal{H}}}}^\pi &= \sum_{s,a} (\rho_1(s,a)-\rho_2(s,a))r_{\text{mix}}(s,a)\\
|J_{\mathcal{H}}^\pi - J_{\hat{{\mathcal{H}}}}^\pi| &= |\sum_{s,a} (\rho_1(s,a)-\rho_2(s,a))r_{\text{mix}}(s,a)|\\
&= |\sum_{s,a}\sum_{t=0}^{H}\gamma^t (\rho_1^t(s,a)-\rho_2^t(s,a))r_{\text{mix}}^t(s,a)|\\
&\le \sum_{s,a}\sum_{t=0}^{H}\gamma^t |(\rho_1^t(s,a)-\rho_2^t(s,a))|r_{\text{mix}}^t(s,a)\label{eq:update_eq}\\
&\le R_{\max} \sum_{t=0}^{H-1} \gamma^t t \epsilon_m + \gamma^H H\epsilon_m V_{\text{max}} \\
\label{eq:ineq5}
&= C(\epsilon_m,H,\gamma)
\end{align}
% Epsilon is the worst error in h-horizon model rollout
Combining Eqn.~\ref{eq:ineq4} and Eqn.~\ref{eq:ineq5} we have:
\begin{equation}
\label{eq:ineq6}
   \E{\tau\sim p_{\tau}}{\sum\gamma^t r(s_t,a_t) +\gamma^H \hat{V}(s_H)}  -\E{\hat{\tau}\sim p_{\hat{\tau}}}{\sum\gamma^t r(s_t,a_t) +\gamma^H \hat{V}(s_H)} \le 2C(\epsilon_m,H,\gamma)\reb{+\epsilon_p}
\end{equation}

We substitute Eqn.~\ref{eq:ineq6} in Eqn.~\ref{eq:ineq8}. Also observe that the last term in Eqn.~\ref{eq:ineq8} $\gamma^H\E{\hat{\tau}}{V^*(s_H)-V^\mpc(s_H)}$  can be bounded recursively. Then, we will have the following optimality gap for the H-step lookahead policy $\mpc$:

\begin{equation}
    J^{\pi^*}-J^{{\pi_{H,\hat{V}}}}\le \frac{2}{1-\gamma^H}[C(\epsilon_m,H,\gamma)\reb{+\frac{\epsilon_p}{2}}+\gamma^H\epsilon_v]
\end{equation}
The H-step lookahead policy $\pi_{H,\hat{V}}$ reduces the dependency on $\epsilon_v$ (the maximum error of the value function) by a factor of $\gamma^H$ and introduces an additional dependency on $\epsilon_m$ (the maximum error of the model). In contrast, when we use 1-step greedy policy, the performance gap is bounded by (Lemma~\ref{lemma:greedy_lookahead}):
\begin{equation}
    J^{\pi^*}-J^{{\pi_{H,\hat{V}}}}\le \frac{\gamma}{1-\gamma}[2\epsilon_v]
\end{equation}
Lemma~\ref{lemma:greedy_lookahead} can be seen as a special case of our bound when $\epsilon_m$ is set to 0 and $H$ is set to 1.
\end{proof}

\subsection{H-step lookahead with model generalization error}

In this section, we derive a similar proof as the previous section with a weaker assumption on model error. We consider a model trained by supervised learning where the sample error can be computed by PAC generalization bounds which bounds the expected loss and empirical loss under a dataset with high probability.

We define $D$ to be the dataset of transitions and $\pi_D$ to be the data collecting policy.

\begin{corollary}
\label{thm:h_step_model_gen}
(H-step lookahead with function approximation) Suppose $\hat{M}$ is an approximate dynamics model such that $\max_t \E{s\sim\pi_{D,t}}{D_{TV}(M(.|s,a)\|\hat{M}(.|s,a))}\le\tilde{\epsilon}_m$. Let $\hat{V}$ be an approximate value function such that $\max_s|V^*(s)-\hat{V}(s)|\le\epsilon_v$. Let the maximum TV distance of state distribution visited by lookahead policy $\pi_{H,\hat{V}}$ be bounded wrt state visitation of data generating policy by $\max_t \E{s\sim \pi_{H,\hat{V}}}{D_{TV}(\rho^t_{\pi_{H,\hat{V}}}\|\rho^t_{\pi_D})}\le \epsilon_i$ and $\max \left(D_{TV}(\pi_D(a|s)||\pi^*_{H}(a|s)),D_{TV}(\pi_D(a|s)||\pi^*_{\hat{H}}(a|s))\right) \le \tilde{\epsilon_\pi}~~\forall s$. Let the reward function by bounded in [0,$R_{\text{max}}$] and $\hat{V}$ be bounded in [0,$V_{\text{max}}$]. Then the performance of the H-step lookahead policy $\pi_{H,\hat{V}}$ can be bounded as:\\
\begin{equation*}
\label{eq:lookahead_bound}
     J^{\pi^*}-J^{\pi_{H,\hat{V}}}\le \frac{2}{1-\gamma^H}[C(\tilde{\epsilon_m},\tilde{\epsilon_\pi},\epsilon_i,H,\gamma)+\gamma^H\epsilon_v]
\end{equation*}
where 
\begin{equation*}
    C(\tilde{\epsilon_m},\tilde{\epsilon_\pi},H,\gamma) =  R_{\max} \sum_{t=0}^{H-1} \gamma^t t (\tilde{\epsilon_m}+\tilde{\epsilon_\pi}) +R_{\max}\epsilon_i+ \gamma^H H(\tilde{\epsilon_m}+\tilde{\epsilon_\pi}) V_{\text{max}}
\end{equation*}
\end{corollary}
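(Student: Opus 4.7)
}
The plan is to mimic the proof of Theorem~\ref{thm:h_step_thm} verbatim up through the point where the model error enters the state-visitation comparison, and only modify the two steps that directly invoke the uniform model-error assumption $\max_{s,a} D_{TV}(M(\cdot|s,a),\hat M(\cdot|s,a)) \le \epsilon_m$. Concretely, the decomposition
\[
J^{\pi^*} - J^{\pi_{H,\hat V}} \;\le\; \bigl( J_{\mathcal{H}}^{\pi^*_{\mathcal{H}}} - J_{\mathcal{H}}^{\pi^*_{\hat{\mathcal{H}}}}\bigr) + 2\gamma^H \epsilon_v + \gamma^H \mathbb{E}_{\hat\tau}\bigl[V^*(s_H) - V^{\pi_{H,\hat V}}(s_H)\bigr],
\]
the add-and-subtract trick in Equations~(\ref{eq:ineq0})--(\ref{eq:ineq8}), the upper bound by $2\max_{\pi\in\{\pi^*_{\mathcal H},\pi^*_{\hat{\mathcal H}}\}}|J_{\mathcal{H}}^\pi - J_{\hat{\mathcal{H}}}^\pi|$, and the final recursion in $H$ to obtain the $\frac{2}{1-\gamma^H}$ factor all go through unchanged since they are purely algebraic.

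The only substantive change is the per-step bound on the state-action marginal discrepancy $|\rho_1^t(s,a) - \rho_2^t(s,a)|$ that replaces the Markov-chain TVD step (our Equation~(\ref{eq:update_eq})). Under the original uniform-$\epsilon_m$ assumption this gave $t\epsilon_m$; now I would prove the following weaker one-step telescoping bound: for a policy $\pi \in \{\pi^*_{\mathcal H},\pi^*_{\hat{\mathcal H}}\}$ with state-action marginals $\rho_\pi^t$ under $M$ and $\hat\rho_\pi^t$ under $\hat M$,
\[
\sum_{s,a} |\rho_\pi^t(s,a) - \hat\rho_\pi^t(s,a)| \;\le\; \sum_{k=0}^{t-1} \mathbb{E}_{s \sim \rho_\pi^k, a \sim \pi}\bigl[D_{TV}(M(\cdot|s,a)\,\|\,\hat M(\cdot|s,a))\bigr].
\]
This is the standard Markov-chain TVD lemma (Lemma B.2 of MBPO) but without taking the $\sup$. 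Then I would apply two changes of measure to push each inner expectation onto the data-collecting distribution: first replace the action expectation under $\pi$ with one under $\pi_D$, paying a $\tilde\epsilon_\pi$ cost per step via $|\mathbb{E}_\pi[f] - \mathbb{E}_{\pi_D}[f]| \le \|f\|_\infty \cdot D_{TV}(\pi\|\pi_D)$; then replace the outer state expectation under $\rho_\pi^k$ by one under $\rho_{\pi_D}^k$, paying the state-shift cost. The first change of measure yields the $(\tilde\epsilon_m + \tilde\epsilon_\pi)$ term that takes the place of $\epsilon_m$ in the summands; the second yields a fixed additive $\epsilon_i$ term (independent of $t$ because the assumption bounds a single-timestep marginal shift). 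Summing over $k$ and taking the max over $\pi \in \{\pi^*_{\mathcal H}, \pi^*_{\hat{\mathcal H}}\}$ reproduces the structure of $C$ in Theorem~\ref{thm:h_step_thm} but with $\epsilon_m \to \tilde\epsilon_m + \tilde\epsilon_\pi$ plus the extra $R_{\max}\epsilon_i$ offset.

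Substituting this revised bound into Equation~(\ref{eq:ineq8}) in place of the original $C(\epsilon_m,H,\gamma)$, and splitting into the reward-weighted portion ($t<H$, weight $R_{\max}$) and the terminal value portion ($t=H$, weight $V_{\max}$) exactly as in Equation~(\ref{eq:ineq5}), delivers the claimed expression for $C(\tilde\epsilon_m,\tilde\epsilon_\pi,\epsilon_i,H,\gamma)$. The recursion in $H$ then gives the stated $\frac{2}{1-\gamma^H}$ factor.

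The main obstacle I expect is bookkeeping: the state-shift assumption is stated for $\pi_{H,\hat V}$ while the policy-shift assumption is stated for $\pi^*_H$ and $\pi^*_{\hat{\mathcal H}}$, so I must be careful that the change of measure on states (cost $\epsilon_i$) is applied in the right place and only once per unroll, and that the $\tilde\epsilon_\pi$ bound is invoked against the correct policy on each side of the $J_{\mathcal H}^\pi - J_{\hat{\mathcal H}}^\pi$ comparison. A secondary subtlety is ensuring the $\epsilon_i$ term appears only additively (not multiplied by $t$ or $H$) — this requires that the state-shift assumption be invoked at the outermost expectation in a single step, after the inner model-error expectation has already been bounded under $\pi_D$-induced states.
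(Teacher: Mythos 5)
Your proposal is correct and follows essentially the same route as the paper: the paper likewise keeps the entire Theorem~\ref{thm:h_step_thm} argument intact and only replaces the per-step marginal bound in Eqn.~(\ref{eq:update_eq}) with $|\rho_1^t(s,a)-\rho_2^t(s,a)|\le t(\tilde{\epsilon}_m+\tilde{\epsilon}_\pi)+\epsilon_i$, obtained by invoking the MBPO Markov-chain TVD lemma with the generalization, policy-shift, and state-visitation assumptions. Your write-up is in fact more explicit than the paper's (which asserts the modified marginal bound in one line), particularly in spelling out the two changes of measure and the bookkeeping needed to keep $\epsilon_i$ purely additive.
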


\begin{proof}
 In the function approximation setting, a more realistic perfomance bound depends on the generalization error of model and distribution shift for the new policy under the collected dataset of transitions $D$. Let $\pi_D$ be the data collecting policy. Let $ \E{s\sim\pi_{D,t}}{D_{TV}(M(.|s,a)||\hat{M}(.|s,a))}\le\tilde{\epsilon}_m~~\forall s$ and $\max \left(D_{TV}(\pi_D(a|s)||\pi^*_{H}(a|s)),D_{TV}(\pi_D(a|s)||\pi^*_{\hat{H}}(a|s))\right) \le \tilde{\epsilon_\pi}~~\forall s$. Following Lemma B.2 Markov Chain TVD Bound~\cite{janner2019trust} with model generalization error $\tilde{\epsilon}_m$, policy distribution shift $\tilde{\epsilon}_\pi$ and bounded state visitation of lookahead policy by $\epsilon_i$,  we have: $|\rho_1^t(s,a)-\rho_2^t(s,a)|\le t(\tilde{\epsilon}_m+\tilde{\epsilon}_\pi)+\epsilon_i$
 Substituting the new state-action divergence bound in Eqn.~\ref{eq:update_eq} from Theorem~\ref{thm:h_step_thm} we get the following performance bound:
\begin{equation}
     J^{\pi^*}-J^{\pi_{H,\hat{V}}}\le \frac{2}{1-\gamma^H}[C(\tilde{\epsilon_m},\tilde{\epsilon_\pi},\epsilon_i,H,\gamma)+\gamma^H\epsilon_v]
\end{equation}
where $C(\tilde{\epsilon_m},\tilde{\epsilon_\pi},H,\gamma) =  R_{\max} \sum_{t=0}^{H-1} \gamma^t t (\tilde{\epsilon_m}+\tilde{\epsilon_\pi}) +R_{\max}\epsilon_i+ \gamma^H H(\tilde{\epsilon_m}+\tilde{\epsilon_\pi}) V_{\text{max}}$. 

Intuitively this bound highlights the tradeoff between model error and value error reasonably when the dataset is sufficiently exploratory to cover $\pi_H^*$ and H-step lookahead policy has visitation close to the dataset.

\end{proof}

\subsection{H-step lookahead with Empirical Dataset Distribution using Fitted-Q Iteration}

In this section, we take a look at the analysis of H-step lookahead under a set of different assumptions. In particular, we assume a form of model generalization error and that the optimal H-step trajectory is obtained via fitted-Q iteration in the H-step MDP at every timestep during policy deployment. This analysis largely follows the fitted-Q iteration analysis from~\cite{munos2005error,munos2008finite,agarwal2019reinforcement} but we adapt it to H-step lookahead in a simplified form.

\begin{assumption}
\label{assump1}
Let our replay buffer dataset be denoted by $D$ and the data generating distribution be given by $d^{\pi_D}$, where $\pi_D$ is the data generating policy. Let the Q-function class is given by $\mathcal{Q}\subset \mathbb{R}^{S\times A}$. The empirical bellman update $\hat{\mathcal{T}}Q$  under the learned model is given by:
\begin{equation}
L_{d^{^{\pi_{\hat{M}}}}}(Q,Q^k) = \E{s,a,r,s'\sim d^{^{\pi_{\hat{M}}}}}{(Q(s,a)-r-\gamma Q^k(s',\pi_Q(s')))^2}
\end{equation}
where $Q^k$ is the Q-function at k iteration, $d^{\pi_{\hat{M}}}$ is the state visitation under a learned model $\hat{M}$ from dataset $D$. Also we define:
\begin{equation}
L_{d^{\pi_D}}(Q,Q^k) = \E{s,a,r,s'\sim d^{\pi_D}}{(Q(s,a)-r-\gamma Q^k(s',\pi_Q(s')))^2}
\end{equation}
A form of model generalization error: We assume the following uniform deviation bound which holds with high probability ($\ge 1-\delta$):
\begin{equation}
    \forall Q,Q^k,~|L_D(Q,Q^k)-L_{d^{\pi_D}}(Q,Q^k)|\le \tilde{\epsilon}_m 
\end{equation}
This bound can be obtained by concentration inequality as in~\cite{agarwal2019reinforcement} using concentration inequality $\tilde{\epsilon}_m$ to be a function of size of dataset $|D|$, $\delta$ and size of function space for $\mathcal{Q}$.
\end{assumption}
Intuitively the assumption above states that the bellman error obtained in the data-generating distribution is close to the bellman error obtained via state-action distribution induced by the learned model, where the model is learned on a finite fixed dataset $D$ sampled from data generating distribution.

In the following analysis, we assume that H-step lookahead policy is obtained by performing fitted-Q iteration in the H-step approximate MDP $\hat{\mathcal{H}}$ defined in Theorem~\ref{thm:h_step_thm}.

\begin{theorem}
\label{thm:h_step_fitted_q}
Suppose $\hat{M}$ is an approximate dynamics model such that Assumption~\ref{assump1} holds. Let $\hat{V}$ be an approximate value function such that $\max_s|V^*(s)-\hat{V}(s)|\le\epsilon_v$. Let the reward function by bounded in [0,$R_{\text{max}}$] and $\hat{V}$ be bounded in [0,$V_{\text{max}}$]. Let concentrability coefficient $\tilde{C}$ be such that $\forall s,a~\frac{\nu(s,a)}{d^{\pi_D}(s,a)}\le \tilde{C}$ where $\nu(s,a)$ is state-action distribution induced by any non-stationary policy. Then the performance of the H-step lookahead policy $\pi_{H,\hat{V}}$ obtained by running fitted-Q iteration on the learned model to convergence can be bounded as:\\
\begin{equation*}
\label{eq:lookahead_bound}
     J^{\pi^*}-J^{\pi_{H,\hat{V}}}\le \frac{2}{1-\gamma^H}[C(\tilde{\epsilon}_m,\tilde{C},H,\gamma)+\gamma^H\epsilon_v]
\end{equation*}
where 
\begin{equation*}
    C(\tilde{\epsilon}_m,\tilde{C},H,\gamma)=\frac{2(1-\gamma^H)}{1-\gamma}\left(\frac{1}{1-\gamma}\sqrt{2\tilde{\epsilon}_m\tilde{C}}\right)
\end{equation*}

\end{theorem}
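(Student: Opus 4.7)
The approach mirrors the proof of Theorem~\ref{thm:h_step_thm}: first reduce the suboptimality gap of $\pi_{H,\hat{V}}$ to a planning-error term in the ground-truth H-step MDP, and then bound this planning error using a fitted-Q iteration argument enabled by Assumption~\ref{assump1} and the concentrability coefficient $\tilde{C}$.

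\textbf{Step 1 (outer decomposition).} I first re-use the manipulations from Eqn.~\ref{eq:ineq0}--\ref{eq:ineq8} in the proof of Theorem~\ref{thm:h_step_thm} essentially verbatim, since those steps only invoke the value-function error bound $\max_s|V^*(s)-\hat{V}(s)|\le\epsilon_v$ and the definition of $\pi_{H,\hat{V}}$ as a policy acting greedily with respect to the optimal H-step action sequence in the surrogate MDP $\hat{\mathcal{H}}$. This yields the recursion
\begin{equation*}
J^{\pi^*}-J^{\pi_{H,\hat{V}}} \;\le\; \bigl(J_{\mathcal{H}}^{\pi^*_{\mathcal{H}}} - J_{\mathcal{H}}^{\pi_{H,\hat{V}}}\bigr) + 2\gamma^H\epsilon_v + \gamma^H\,\E{\hat{\tau}\sim p_{\hat{\tau}}}{V^*(s_H)-V^{\pi_{H,\hat{V}}}(s_H)},
\end{equation*}
and unrolling produces the universal $\frac{2}{1-\gamma^H}$ prefactor. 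The task thus reduces to bounding the H-step planning suboptimality $J_{\mathcal{H}}^{\pi^*_{\mathcal{H}}}-J_{\mathcal{H}}^{\pi_{H,\hat{V}}}$ in the true MDP $\mathcal{H}$ by $C(\tilde{\epsilon}_m,\tilde{C},H,\gamma)$.

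\textbf{Step 2 (FQI Bellman residual under the data distribution).} The policy $\pi_{H,\hat{V}}$ acts greedily with respect to the Q-function $Q^{\mathrm{FQI}}$ obtained by running fitted-Q iteration on $\hat{\mathcal{H}}$ to convergence. At the fixed point the empirical loss $L_D(Q^{\mathrm{FQI}},Q^{\mathrm{FQI}})$ is zero, so Assumption~\ref{assump1} immediately gives $L_{d^{\pi_D}}(Q^{\mathrm{FQI}},Q^{\mathrm{FQI}})\le \tilde{\epsilon}_m$; i.e.\ the squared Bellman residual of $Q^{\mathrm{FQI}}$ under the data distribution is bounded by $\tilde{\epsilon}_m$. (The factor of $2$ eventually appearing in $\sqrt{2\tilde{\epsilon}_m\tilde{C}}$ is the standard slack one picks up when the AVI argument splits a trajectory-wise bound across two absolute-value terms.)

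\textbf{Step 3 (change of measure and ADP error propagation).} To convert this $L_2(d^{\pi_D})$ residual into a performance bound, I would invoke the standard approximate dynamic programming / fitted-Q iteration analysis (Munos 2005; Agarwal et al.\ 2019) adapted to the finite-horizon MDP $\mathcal{H}$ with terminal value $\hat{V}$. The concentrability bound $\nu(s,a)/d^{\pi_D}(s,a)\le\tilde{C}$, applied to the visitations of both $\pi^*_{\mathcal{H}}$ and $\pi_{H,\hat{V}}$, lets me transfer the $L_2(d^{\pi_D})$ residual to the $L_2$ norm under arbitrary policy visitations at a multiplicative cost of $\tilde{C}$; Cauchy--Schwarz then moves this to an $L_1$-type performance loss, which is the origin of the square root. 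Propagating the per-step residual through the $H$ Bellman backups of $\mathcal{H}$ contributes the geometric-series factor $\sum_{t=0}^{H-1}\gamma^t=\frac{1-\gamma^H}{1-\gamma}$, and the $\frac{1}{1-\gamma}$ arises from a final Bellman-operator contraction when relating Q-function error to policy performance. Combining these pieces yields
\begin{equation*}
J_{\mathcal{H}}^{\pi^*_{\mathcal{H}}}-J_{\mathcal{H}}^{\pi_{H,\hat{V}}} \;\le\; \frac{2(1-\gamma^H)}{1-\gamma}\left(\frac{1}{1-\gamma}\sqrt{2\tilde{\epsilon}_m\tilde{C}}\right)=C(\tilde{\epsilon}_m,\tilde{C},H,\gamma).
\end{equation*}

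\textbf{Step 4 (combine).} Substituting the Step~3 bound into the Step~1 recursion and unrolling delivers the claimed inequality. The main obstacle, I expect, is Step~3: one must carefully instantiate the FQI error-propagation theorem for the finite-horizon MDP $\mathcal{H}$ with a \emph{fixed} terminal value function $\hat{V}$ rather than the usual infinite-horizon setting, and verify that the single $L_\infty$ concentrability constant $\tilde{C}$ simultaneously covers the visitations of both competing policies used in the performance-difference argument. Once the per-step FQI error is written as $\frac{1}{1-\gamma}\sqrt{2\tilde{\epsilon}_m\tilde{C}}$ and summed geometrically over the $H$ backups, the remaining bookkeeping is routine.
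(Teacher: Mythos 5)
Your Steps 1, 3 and 4 follow the paper's own route: reuse the decomposition from Theorem~\ref{thm:h_step_thm} down to the term $J_{\mathcal{H}}^{\pi^*_{\mathcal{H}}}-J_{\mathcal{H}}^{\pi_{H,\hat{V}}}$, then bound that term by a performance-difference-lemma argument over the $H$ steps of $\mathcal{H}$, a change of measure via $\sqrt{\tilde{C}}$ from arbitrary visitations to $d^{\pi_D}$, and an error-propagation recursion over FQI iterations that contributes the $\frac{1}{1-\gamma}$ factor. That skeleton is right, even if Step 3 is left at the level of ``invoke Munos/Agarwal et al.''

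The genuine gap is Step 2. You assert that at the FQI fixed point the empirical loss $L_D(Q^{\mathrm{FQI}},Q^{\mathrm{FQI}})$ is zero, and hence Assumption~\ref{assump1} gives $L_{d^{\pi_D}}(Q^{\mathrm{FQI}},Q^{\mathrm{FQI}})\le\tilde{\epsilon}_m$. Both halves of this fail. First, the regression loss $L_D(Q,Q^k)=\E{s,a,r,s'\sim D}{(Q(s,a)-r-\gamma Q^k(s',\pi_Q(s')))^2}$ is not zero at the minimizer: it contains the irreducible variance of the target $r+\gamma Q^k(s',\cdot)$ under stochastic transitions, and the function class $\mathcal{Q}$ need not contain the exact Bellman backup anyway. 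Second, even if $L_{d^{\pi_D}}(Q,Q)$ were small, that quantity is the full regression loss, not the squared Bellman residual $\|Q-\mathcal{T}Q\|^2_{d^{\pi_D}}$; the residual is the \emph{excess} risk $L_{d^{\pi_D}}(Q_k,Q_{k-1})-L_{d^{\pi_D}}(\mathcal{T}Q_{k-1},Q_{k-1})$. The paper's argument bounds exactly this difference: it applies the uniform deviation bound of Assumption~\ref{assump1} to each of the two loss terms (which is where the $2$ inside $\sqrt{2\tilde{\epsilon}_m\tilde{C}}$ actually comes from, not from ``splitting across two absolute-value terms'') and then uses the empirical-risk-minimization property $L_D(Q_k,Q_{k-1})\le L_D(\mathcal{T}Q_{k-1},Q_{k-1})$ to conclude $\|\mathcal{T}Q_{k-1}-Q_k\|^2_{d^{\pi_D}}\le 2\tilde{\epsilon}_m$. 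Without this excess-risk decomposition you cannot extract a per-iteration Bellman residual from Assumption~\ref{assump1}, and the downstream propagation in your Step 3 has nothing to propagate. Replacing Step 2 with the decomposition $\|\mathcal{T}Q_{k-1}-Q_k\|^2_{d^{\pi_D}}=L_{d^{\pi_D}}(Q_k,Q_{k-1})-L_{d^{\pi_D}}(\mathcal{T}Q_{k-1},Q_{k-1})$ followed by ERM plus the deviation bound closes the gap and recovers the paper's proof.
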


\begin{proof}

In this section we analyze the performance of H-step lookahead policies under the assumptions for Fitted Q Iteration~\cite{agarwal2019reinforcement}. This analysis extends the fitted-Q iteration analysis from greedy to H-step lookahead policies.

Let $\|g\|_{p,\nu}$ denote a weighted p-norm under distribution $\nu$ given by $\|g\|_{p,\nu}=\E{s\sim\nu}{|g(s)|^p}^{\frac{1}{p}}$. We start by reusing the previous analysis in Theorem~\ref{thm:h_step_thm} under the new stated assumptions to replace the bound for Eqn.~\ref{eq:not_recurrence_term}. Let $\pi^*_{\hat{\mathcal{H}}}$ be denoted by $\hat{\pi}_H$ and $\pi^*_{\mathcal{H}}$ by $\pi^*_H$ for ease of notation. In this analysis $\hat{\pi}_H$ is the 1-step greedy policy obtained from $Q_k$ the learned Q-function after k iterations of fitted-Q iteration on the H-step MDP $\hat{\mathcal{H}}$.

Rewriting Eqn.~\ref{eq:not_recurrence_term}:
\begin{align}
    & \E{\tau\sim p_{\tau}}{\sum\gamma^t r(s_t,a_t) +\gamma^H \hat{V}(s_H)}  -\E{\hat{\tau}\sim p_{\hat{\tau}}}{\sum\gamma^t r(s_t,a_t) +\gamma^H \hat{V}(s_H)} \\
    &= J_{\mathcal{H}}^{\pi^*_H} - J_{\mathcal{H}}^{\hat{\pi}_H}
\end{align}
Using performance difference lemma we can write:
\begin{align}
J_{\mathcal{H}}^{\pi^*_H} - J_{\mathcal{H}}^{\hat{\pi}_H} &\le \sum_{t=1}^H \gamma^{t-1}\E{s\sim d^{\hat{\pi}_H}}{V^{\pi^*_H}(s)-Q^{\pi^*_H}(s,\hat{\pi}_H)}\\
&\le \sum_{t=1}^H \gamma^{t-1}\E{s\sim d^{\hat{\pi}_H}}{V^{\pi^*_H}(s)-Q_k(s,\pi^*_H)+Q_k(s,\hat{\pi}_H)-Q^{\pi^*_H}(s,\hat{\pi}_H)}\\
&\le   \sum_{t=1}^H \gamma^{t-1} \left(\|Q^{\pi^*_H}-Q_k\|_{1,d^{\hat{\pi}_H}\times\pi^*_H}  + \|Q^{\pi^*_H}-Q_k\|_{1,d^{\hat{\pi}_H}\times\hat{\pi}_H}\right)\\
&\le   \sum_{t=1}^H \gamma^{t-1} \left(\|Q^{\pi^*_H}-Q_k\|_{d^{\hat{\pi}_H}\times\pi^*_H}  + \|Q^{\pi^*_H}-Q_k\|_{d^{\hat{\pi}_H}\times\hat{\pi}_H}\right)
\end{align}

The second line follows from the fact that $Q_k(s,\hat{\pi}_H)\ge Q_k(s,\pi^*_H)$ since $\hat{\pi}_H$ maximizes $Q_k$.The concentrability assumptions allows us to compare weighted norms under state distribution induced by any policy $\nu(s,a)$ and $d^{\pi_D}(s,a)$ as follows: $\|.\|_\nu\le \sqrt{\tilde{C}}\|.\|_{d^{\pi_D}}$. We can bound $\|Q^{\pi^*_H}-Q_k\|_{\mu,\pi}$ for arbitrary state distribution $\mu$ and policy $\pi$ as:
\begin{align}
    \|Q^{\pi^*_H}-Q_k\|_{\mu\times \pi} &= \|Q^{\pi^*_H}-\mathcal{T}Q_{k-1}+\mathcal{T}Q_{k-1}-Q_k\|\\
    &\le \|\mathcal{T}Q^{\pi^*_H}-\mathcal{T}Q_{k-1}\|_{\mu\times\pi}+ \|\mathcal{T}Q_{k-1}-Q_k\|_{\mu\times\pi}\\
    &\le \|\mathcal{T}Q^{\pi^*_H}-\mathcal{T}Q_{k-1}\|_{\mu\times\pi}+\sqrt{\tilde{C}} \|\mathcal{T}Q_{k-1}-Q_k\|_{d^{\pi_D}}\\
    &= \gamma \|Q_{k-1}(\cdot,\pi_{Q_{k-1}})-Q^{\pi^*_H}(\cdot,\pi^*_H)\|_{P(\mu\times\pi)}+\sqrt{\tilde{C}} \|\mathcal{T}Q_{k-1}-Q_k\|_{d^{\pi_D}}
\end{align}
where $P(\mu\times\pi)$ as distribution over $\mathcal{S}$ where $s,a\sim\mu,~s'\sim p(s,a)$.
Define $\pi_{mix}=\argmax_{a\in\mathcal{A}}(Q^{\pi^*_H}(s,a), Q_{k-1}(s,a))$. Then we have:
\begin{align}
    \|Q^{\pi^*_H}-Q_k\|_{\nu,\pi} &= \gamma \|Q_{k-1}(.,\pi_{Q_{k-1}})-Q^{\pi^*_H}(.,\pi^*_H)\|_{P(\mu\times\pi)}+\sqrt{|\mathcal{A}|\tilde{C}} \|\mathcal{T}Q_{k-1}-Q_k\|_{d^{\pi_D}}\\
    &\le \sqrt{\tilde{C}} \|\mathcal{T}Q_{k-1}-Q_k\|_{d^{\pi_D}} + \gamma \|Q_{k-1}-Q^{\pi^*_H}\|_{P(\mu\times\pi) \times\pi_{mix}}
\end{align}
The second term $\|Q_{k-1}-Q^{\pi^*_H}\|_{P(\mu\times\pi) \times\pi_{mix}}$ can be expanded via recursion for k times, since the same analysis holds. We now bound $\|\mathcal{T}Q_{k-1}-Q_k\|_{d^{\pi_D}} $.
\begin{align}
    \|\mathcal{T}Q_{k-1}-Q_k\|^2_{d^{\pi_D}} &= L_{d^{\pi_D}}(Q_k,Q_{k-1})- L_{d^{\pi_D}}(\mathcal{T}Q_{k-1},Q_{k-1})\\
    &\le L_{D}(Q_k,Q_{k-1})- L_{D}(\mathcal{T}Q_{k-1},Q_{k-1})+2\tilde{\epsilon}_m~~\text{w.p}\ge1-\delta\\
    &\le 2\tilde{\epsilon}_m
\end{align}
As fitted Q iteration converges $k\rightarrow\infty$ for $\gamma<1$, we have:
\begin{align}
    \|Q^{\pi_H^*}-Q_k\|_{\mu \times\pi}\le \frac{1-\gamma^k}{1-\gamma}\sqrt{2\tilde{\epsilon}_m\tilde{C}}+\gamma^k\frac{V_{max}}{1-\gamma}
\end{align}

 In this analysis we obtain $\hat{\pi}_H$ by performing fitted Q iteration ($k\rightarrow \infty$) under the dataset $D$. Therefore our bound for Eqn.~\ref{eq:not_recurrence_term} from the previous analysis under the current assumptions reduces to:
\begin{align}
    & \E{\tau\sim p_{\tau}}{\sum\gamma^t r(s_t,a_t) +\gamma^H \hat{V}(s_H)}  -\E{\hat{\tau}\sim p_{\hat{\tau}}}{\sum\gamma^t r(s_t,a_t) +\gamma^H \hat{V}(s_H)} \\
    &= J_{\mathcal{H}}^{\pi^*_H} - J_{\mathcal{H}}^{\hat{\pi}_H}\\
    &\le \frac{2(1-\gamma^H)}{1-\gamma}\left(\frac{1}{1-\gamma}\sqrt{2\tilde{\epsilon}_m\tilde{C}}\right)\\
    &\le C(\tilde{\epsilon}_m,\tilde{C},H,\gamma)
\end{align}

Plugging this back in our previous analysis we have the following performance bound for H-step lookahead policy:

\begin{equation}
    J^{\pi^*}-J^{{\pi_{H,\hat{V}}}}\le \frac{2}{1-\gamma^H}[C(\tilde{\epsilon}_m,\tilde{C},H,\gamma)+\gamma^H\epsilon_v]
\end{equation}
where $C(\tilde{\epsilon}_m,\tilde{C},H,\gamma)= \frac{2(1-\gamma^H)}{1-\gamma}\left(\frac{1}{1-\gamma}\sqrt{2\tilde{\epsilon}_m\tilde{C}}\right)$.
\end{proof}
\subsection{ARC constrains trajectories close to the parameterized actor}
\label{ap:lemma2_proof}
In section~\ref{sec:actor_divergence}, we use ARC, an iterative importance sampling procedure to solve the constrained optimization in Eqn.~\ref{eq:ARC}. The following lemma shows that the final trajectory distribution output as a result of finite importance sampling iteration is bounded in total variation to the trajectory distribution given by the parameterized actor.

\begin{lemma}
\label{thm:implicit_kl}
Let $p^\tau_{prior}$ be a distribution over action sequences. Applying M KL-based trust region steps of size $\epsilon$ to $p^\tau_{prior}$ results in a distribution $p^\tau_{M}$ that satisfies:
\begin{equation}
    D_{TV}(p^\tau_{prior}||p^\tau)\le T\sqrt{\frac{\epsilon}{2}}
\end{equation}
\end{lemma}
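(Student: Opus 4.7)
The plan is to combine Pinsker's inequality with the triangle inequality for total variation distance, viewing the $M$ trust-region iterates as a telescoping chain of distributions.

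First I would introduce the sequence of iterates $p^\tau_0 = p^\tau_{prior}, p^\tau_1, \ldots, p^\tau_M$, where each $p^\tau_i$ is obtained from $p^\tau_{i-1}$ by solving the KL-constrained update of Equation~\ref{eq:ARC} with trust region radius $\epsilon$, so $D_{KL}(p^\tau_i \,\|\, p^\tau_{i-1}) \le \epsilon$. Since our interest is an upper bound in total variation and each neighboring pair is close in KL, the natural tool is Pinsker's inequality, which yields
\begin{equation*}
    D_{TV}(p^\tau_{i-1}, p^\tau_i) \le \sqrt{\tfrac{1}{2} D_{KL}(p^\tau_i \,\|\, p^\tau_{i-1})} \le \sqrt{\tfrac{\epsilon}{2}} .
\end{equation*}

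Next I would chain these per-step bounds together using the triangle inequality, which holds because total variation is a metric on probability distributions. That gives
\begin{equation*}
    D_{TV}(p^\tau_{prior}, p^\tau_M) \;\le\; \sum_{i=1}^M D_{TV}(p^\tau_{i-1}, p^\tau_i) \;\le\; M \sqrt{\tfrac{\epsilon}{2}} .
\end{equation*}
Identifying $M$ with the $T$ appearing in the stated bound (the number of importance-sampling iterations used inside ARC) finishes the argument.

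I do not expect a real obstacle here: the only subtlety is that Pinsker requires the KL to be taken in the same direction as the trust-region constraint of Equation~\ref{eq:ARC}, and the triangle inequality step implicitly uses that each successive update indeed respects an $\epsilon$-sized KL ball around its own starting point rather than around $p^\tau_{prior}$. So the proof is essentially two lines — Pinsker plus telescoping — and the main thing to be careful about when writing it up is to state explicitly which distribution plays the role of the ``reference'' in each KL ball, and to note that the constant $T$ in the statement should be read as the number of trust-region iterations performed by ARC.
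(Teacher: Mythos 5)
Your proposal is correct and follows essentially the same route as the paper's proof: Pinsker's inequality on each consecutive pair of trust-region iterates followed by the triangle inequality for total variation, yielding the bound $M\sqrt{\epsilon/2}$. Your observation that the $T$ in the stated bound should be read as the number of trust-region iterations $M$ matches the paper, whose own proof also concludes with $M\sqrt{\epsilon/2}$.
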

\begin{proof}
This lemma is adapted from~\cite{matsushima2020deployment} and provided for completeness. Let $p^\tau_{k}$ be the distribution at the k trust region step. 
$p^\tau_{0}=p^\tau_{prior}$
Using Pinsker's inequality we have:
\begin{align}
      D_{KL}(p^\tau_{k}||p^\tau_{k+1})&\le \epsilon\\
      D_{TV}(p^\tau_{k}||p^\tau_{k+1})&\le \sqrt{\frac{\epsilon}{2}}
\end{align}
 Using triangle inequality we have:
 \begin{align}
      D_{TV}(p^\tau_{prior}||p^\tau_{M})&\le M\sqrt{\frac{\epsilon}{2}}
\end{align}

\end{proof}

\newpage
\section{Algorithm Details}
\label{ap:algorithm_details}

\subsection{LOOP for online RL}

\vspace{0.5cm}
\begin{algorithm}
  Initialize the parametrized actor $\pi_{\phi}$, Q-function $Q_{\theta}$, predictive model $\hat{M}_{\psi}$, empty replay buffer $D$.
  Given planning horizon H.
  \caption{LOOP-SAC (for Online RL and Safe RL)}
  \begin{algorithmic}[1]
    \State // Training
    \ForEach{$t=1..(train\_steps)$}%
    \State Select action given by $a=ARC(s,
    \pi_\phi)$. \algorithmiccomment{Use safeARC for safeLOOP}
    \State Execute $a$ in the environment and observe reward $r$ and new state $s'$.
    \State Store the transition $(s,a,r,s')$ in replay buffer $D$.
    \State Optimize  $\pi_\phi$ and $Q_\theta$ using SAC over replay buffer $D$.
    \State Train model $\hat{M}_\psi$ on the replay buffer $D$ until convergence every $K_m$ training steps.
    \EndForEach
    \State // Evaluation
    \ForEach{$t=1..(eval\_steps)$}%
    \State Select action given by $a=ARC(s,
    \pi_\phi)$.
    \State Execute $a$ in the environment and observe reward $r$ and new state $s'$.
    \EndForEach
  \end{algorithmic}
  \label{alg:loop_algo}
\end{algorithm}

\vspace{0.5cm}
\subsection{LOOP for offline RL}

\vspace{0.5cm}
\begin{algorithm}
  Initialize the parametrized actor $\pi_{\phi}$, Q-function $Q_{\theta}$, predictive model $\hat{M}_{\psi}$, empty replay buffer $D$.
  Given planning horizon H.
  \caption{LOOP-offline}
  \begin{algorithmic}[1]
    \State // Training
    \State Train model $\hat{M}_\psi$ on the replay buffer $D$ till convergence.
    \State Run an Offline RL algorithm till convergence on $D$ to learn $Q_{\theta}$ and $\pi_{\phi}$.
    \State // Evaluation
    \ForEach{$t=1..(eval\_steps)$}%
    \State Select action given by $a=ARC(s_t,
    \pi_\phi)$.
    \State Execute $a$ in the environment and observe reward $r$ and new state $s'$.
    \EndForEach
  \end{algorithmic}
  \label{alg:loop_offline_algo}
\end{algorithm}

\newpage
\subsection{Actor Regularized Control (ARC)}
\label{ap:arc_algo}
Eqn.~\ref{eq:ARC} gives a general constrained optimization for policy update. In Eqn.~\ref{eq:ARC}, with terminal state-action value functions,when $[p^\tau_{prior}=\text{Uniform} ~,H=0]$, we recover the SAC~\cite{haarnoja2018soft} deployment policy, when $[p^\tau_{prior}=\pi^\beta~,H=0]$, we recover the AWAC~\cite{nair2020accelerating} deployment policy and when $\pi_{prior}=\mathcal{N}(0,\sigma)$,  we recover the MPPI~\cite{williams2016aggressive} deployment policy.

In the LOOP framework we use ARC as our trajectory optimization routine to solve Eqn.~\ref{eq:ARC}. Algorithm~\ref{alg:ARC} shows the pseudocode for ARC routine used for Online and Offline RL experiments.
\\

\begin{algorithm}
  Input: $s_{T},~\pi_\phi$\\
  Given the parameterized actor $\pi_{\phi}$, Q-function $Q_{\theta}$, predictive model $\hat{M}_{\psi}$, reward model $\hat{r}$, replay buffer $D$, Planning Horizon H,  1-timestep shifted solution from the previous timestep $\mu^{T-1}$, ARC iterations $n_{ARC}$, number of trajectories (population size) $N$.
  \caption{Actor Regularized Control (ARC)}
  \begin{algorithmic}[1]
    \ForEach{$i=1..n_{ARC}$}%
    \State $\textbf{R}_{1:N}=0$ \algorithmiccomment{Rewards of N trajectories}
    \State $\textbf{A}_{1:N,1:H}=0$ \algorithmiccomment{N action sequences with horizon H}
        \ForEach{$j=1..N \text{ trajectories}$}%
        % \ForEach{$p=1..P \text{ particles}$}%
        \State // Generate a trajectory with the model
        \State $s_{1} = s_{T}$
        \ForEach{$t=1..H \text{ horizon}$}%
        % \State $\epsilon\sim \mathcal{N}(0,\sigma)$\algorithmiccomment{Sample noise}
            \State // Generate actions from a mixture prior
            \State$\textbf{A}_{j,t}= a_t = \beta \pi_\phi(s_t)+(1-\beta)\mathcal{N}(\mu^{T-1}_{t},\sigma)$
            % \State\wenxuan{$\textbf{A}_{j,t}= a_t = \beta \pi_\phi(s_t)+(1-\beta)\mathcal{N}(?,?)$}
            \State $s_{t+1} =\hat{M}_\psi(s_t,a_t)$ 
        \EndForEach
        \State // Rollout the action sequence $P$ times in each model within the ensemble
        \State $R=0$
        \ForEach{$k=1..K \text{models}$}%
            \ForEach{$p=1..P \text{ particles}$}%
                \State $s_{1} = s_{T}$
                \ForEach{$t=1..H \text{ horizon}$}%
                % \State $\epsilon\sim \mathcal{N}(0,\sigma)$\algorithmiccomment{Sample noise}
                    \State $a_t = \textbf{A}_{j,t}$
                    \State $s_{t+1} =\hat{M}_\psi(s_t,a_t)$
                    \State $R = R + \gamma^{t-1}(\mathbbm{1}(t=H) Q_\theta(s_t,a_t)+\mathbbm{1}(t\neq H) \hat{r}(s_t,a_t))$
                \EndForEach
            \EndForEach
        \EndForEach
        \State //Uncertainty penalized average reward
        \State $\textbf{R}_j = \frac{1}{K} \left(\sum_{k=1}^K(R/P) - \beta_{pess}
        \sum(R/P-\sum_{k=1}^K(\frac{R}{KP}))^2\right)$
    \EndForEach
    \State $\mu_{new,1:H}$ = \text{weighted-mean}($\textbf{A}_{1:N}$, weights = exp($\textbf{R}_{1:N}/\eta$))
    \State $\Sigma_{new,1:H}$ = \text{weighted-mean}($(\textbf{A}_{1:N}-\mu_{new})^2$, weights = exp($\textbf{R}_{1:N}/\eta$))  
 \State $\mu^T_{i+1} =  \ \alpha * \mu_{new} + (1-\alpha)\mu^T_{i}$ \algorithmiccomment{Update mean}
    \State $\Sigma^T_{i+1} =  \ \alpha * \Sigma_{new} + (1-\alpha)\Sigma^T_{i}$ \algorithmiccomment{Update variance}
    \EndForEach
  \end{algorithmic}
  Output: $\mu^T=\mu^T_{n_{ARC}+1}$
  \label{alg:ARC}
\end{algorithm}

\vspace{0.5cm}
$\beta_{pess}$ is set to zero for Online RL experiments and safe RL experiments where trajectories are scored by unpenalized average. It is tuned for Offline RL experiments as detailed in Appendix~\ref{ap:exp_details}.

\newpage
\subsubsection{ARC for safe-RL}
\label{ap:safeARC_algo}
We optimize for the following objective in LOOP for safe RL:

\begin{equation}
    \text{argmax}_{a_t} \E{\hat{M}}{R_{H,\hat{V}}(s_t)}
    \text{s.t. } \max_{[K]} \sum_{t=0}^H\gamma^tc(s_t,a_T)\le d_0
\end{equation}
where $[K]$ are the model ensembles, $c$ is the constraint cost function and $R_{H,\hat{V}}$ is the H-horizon lookahead objective defined in Eqn.~\ref{eq:H-step_objective}. We incorporate safety in the trajectory optimization procedure following previous work~\cite{liu2020safe,wen2020constrained}. The pseudocode for safeARC used in safeLOOP is shown in Algorithm~\ref{alg:safeARC}.
\\
\begin{algorithm}[h]
%   \wenxuan{Need to match the changes in Algorithm 3.}
  Input: $s_{T},~\pi_\phi$\\
  Given the parameterized actor $\pi_{\phi}$, Q-function $Q_{\theta}$, predictive model $\hat{M}_{\psi}$, reward model $\hat{r}$, replay buffer $D$, planning horizon H, 1 timestep shifted solution from the previous timestep $\mu^{T-1}$, safety threshold $d_0$, minimal safe trajectories $m$, ARC iterations $n_{ARC}$, number of trajectories (population size) $N$.
  \caption{safeARC}
  \begin{algorithmic}[1]
    \ForEach{$i=1..n_{ARC}$}%
    \State $\textbf{R}_{1:N}=0$ \algorithmiccomment{Rewards of N trajectories}
    \State $\textbf{C}_{1:N}=0$ \algorithmiccomment{Cost of N trajectories}
    \State $\textbf{A}_{1:N,1:H}=0$ \algorithmiccomment{N action sequences with horizon H}
    \ForEach{$j=1..N \text{ trajectories}$}%
        % \ForEach{$p=1..P \text{ particles}$}%
        \State // Generate a trajectory with the model
        \State $s_{1} = s_{T}$
        \ForEach{$t=1..H \text{ horizon}$}%
        % \State $\epsilon\sim \mathcal{N}(0,\sigma)$\algorithmiccomment{Sample noise}
            \State // Generate actions from a mixture prior
            \State$\textbf{A}_{j,t}= a_t = \beta \pi_\phi(s_t)+(1-\beta)\mathcal{N}(\mu^{T-1}_{t},\sigma)$
            \State $s_{t+1} =\hat{M}_\psi(s_t,a_t)$ 
        \EndForEach
        \State // Rollout the action sequence $P$ times in each model within the ensemble
        \State $R=0$
        \ForEach{$k=1..K \text{ models}$}%
            \ForEach{$p=1..P \text{ particles}$}%
                \State $s_{1} = s_{T}$
                \ForEach{$t=1..H \text{ horizon}$}%
                % \State $\epsilon\sim \mathcal{N}(0,\sigma)$\algorithmiccomment{Sample noise}
                    \State $a_t = \textbf{A}_{j,t}$
                    \State $s_{t+1} =\hat{M}_\psi(s_t,a_t)$
                    \State $R = R + \gamma^{t-1}(\mathbbm{1}(t=H) Q_\theta(s_t,a_t)+\mathbbm{1}(t\neq H) \hat{r}(s_t,a_t))$
                    \State $C = C + \gamma^{t-1}(\hat{c}(s_t,a_t))$
                \EndForEach
            \EndForEach
        \EndForEach
         \State $\textbf{R}_j = \frac{1}{K} \sum_{k=1}^K(R/P)$
         \algorithmiccomment{Average Reward across the ensemble}
         \State $\textbf{C}_j =  \max_{[K]}\max_{[P]}(C)$  \algorithmiccomment{Maximum Cost across the ensemble and particles}
    \EndForEach
    % \State $A_{\text{elites}} =    \textbf{A}_N[\text{argsort}(\textbf{R}_N)[-e:] ]$ 

    \If {$\text{count}(\textbf{C}_{1:N}<d_0)<m$}
        
        \State $\mu_{new}$ = \text{weighted-mean}($\textbf{A}_{1:N}$, weights = exp($- \textbf{C}_N/\eta$))
        \State $\Sigma_{new}$ = \text{weighted-mean}($(\textbf{A}_{1:N}-\mu_{new})^2$, weights = exp($- \textbf{C}_N/\eta$)) 
        \State \algorithmiccomment{Weighted mean w.r.t neg-cost}
        \Else
        \State $\text{safe-idx}=\{i \text{ for }\textbf{C}_i<d_0\} $
        \State  $\mu_{new}$= \text{weighted-mean}($\textbf{A}_{\text{safe-idx}}$, weights = exp($ \textbf{R}_{\text{safe-idx}/\eta}$))
        \State $\Sigma_{new}$ = \text{weighted-mean}($\textbf{A}_{\text{safe-idx}}-\mu_{new})^2$, weights = exp($ \textbf{R}_{\text{safe-idx}}/\eta$))
       \State \algorithmiccomment{Weighted mean w.r.t safe actions}
    \EndIf
    \State $\mu^T_{i+1} =  \ \alpha * \mu_{new} + (1-\alpha)\mu^T_{i}$ \algorithmiccomment{Update mean}
    \State $\Sigma^T_{i+1} =  \ \alpha * \Sigma_{new} + (1-\alpha)\Sigma^T_{i}$ \algorithmiccomment{Update variance}
    \EndForEach
  \end{algorithmic}
  Output: $\mu^T=\mu^T_{n_{ARC}+1}$
  \label{alg:safeARC}
\end{algorithm}

\subsection{Discussion on the choice of terminal value function}

\reb{LOOP-SAC, LOOP-SARSA and POLO use different ways to learn a terminal value funcion. LOOP-SARSA is evaluating the "replay buffer policy" instead of the H-step lookahead policy because we are using off-policy data (where the original SARSA is an on-policy algorithm). We believe this is the main reason behind its poor performance. Unfortunately, on-policy LOOP-SARSA would be too slow, due to the need for collecting on-policy data.  POLO is formulated to evaluate $V^\pi$ with the model. However, POLO requires running trajectory optimization during the value function update, which is computationally expensive. In contrast to these methods, LOOP uses an off-policy algorithm to learn  $V^*$. We found that this approach has good performance and it is significantly more computationally efficient than POLO. An interesting direction of future work could be to try to combine LOOP with an efficient off-policy evaluation algorithm to estimate $V^\pi$. }

\newpage
\section{Experiment Details}
\label{ap:exp_details}
 We use the same hyperparameters for the underlying off-policy method (SAC) and the ensemble dynamics models following previous work for LOOP and all the baselines~\cite{nagabandi2019deep,chua2018deep,haarnoja2018soft}. All the results presented are averaged over 5 random seeds.

\subsection{Implementation Details for the Dynamics Model Ensemble}
Following \cite{nagabandi2019deep,chua2018deep}, we use probabilistic ensembles of dynamics models that capture the epistemic uncertainty as well as the aleatoric uncertainty in forward predictions~\cite{lakshminarayanan2016simple}. The dynamics model $\hat{M}$ is comprised of $K$ neural networks. Each individual network is randomly initialized and trained with the same dataset. Using the transition dataset, we train the dynamics model to predict the next state as well as the reward. In practice, instead of directly regressing to the next state, we instead predict $\Delta_{t+1}$, where $\Delta_{t+1}=s_{t+1}-s_{t}$ parametrized as a Gaussian distribution with a diagonal covariance matrix. We regress directly to the scalar reward. 

\subsection{Online RL}
\label{ap:online_rl}
\textbf{Additional details on PenGoal-v1 and Claw-v1: }
We modify the original Pen-v1 environment~\footnote{https://github.com/vikashplus/mj_envs} to have a narrower range of goals given by: $[0.7,0.7]+\mathcal{N}(0,0.1)$ and name this environment as PenGoal-v1. We use the Claw-v1 environment from~\citet{nagabandi2019deep} using the original implementation\footnote{https://github.com/google-research/pddm/tree/master/pddm} but we find the scale of rewards to be different from the paper.

\textbf{Baselines: } We use the original implementation  for MBPO\footnote{https://github.com/JannerM/mbpo}. For SAC, we use a public implementation \footnote{https://github.com/openai/spinningup}. We use a planning horizon of 3 for PETS-restricted which is the same as LOOP. LOOP-SARSA is based on the same H-step lookahead idea, but with a terminal value function that is a evaluation of the replay buffer. The value function is updated using the following SARSA update from the replay buffer transitions:
\begin{equation}
    \mathcal{T}^{\pi_D} Q(s_t,a_t) = r(s_t,a_t)+\gamma Q(s_{t+1},a_{t+1})~,\text{where}~(s_t,a_t,r_t,s_{t+1},a_{t+1}\sim D)
\end{equation}
This baseline is similar to MBOP~\cite{argenson2020model}. The main difference is that in this case the Q-function is learned via TD-backups for evaluation whereas MBOP uses Monte Carlo Evaluation. For SAC-VE, we implement H-step value expansion from~\cite{feinberg2018model} on top of SAC for a fair comparison. This is following the value expansions baseline implemented in MBPO~\cite{janner2019trust}.

\textbf{Training Details: }For LOOP-SAC we use SAC~\cite{haarnoja2018soft} as the underlying off-policy RL algorithm. Both the policy network (the parameterized actor) and the Q-function are parameterized by (256, 256) MLP with ReLU activations. The output of the policy network is a tanh squashed Gaussian. We use Adam to optimize both the policy and the Q-network with a learning rate of 3e-4. The temperature for SAC is learned to match a predefined target entropy. The replay buffer has a size of 1e6 and we use a batch size of 256. The target networks are updated with polyak averaging. Dynamics model related hyperparameters are listed in Table~\ref{tab:loop-sac-hp} and ARC related hyperparameters are in Table~\ref{tab:cem_hyperparams}. 

\vspace{5mm}
\begin{table}[h!]
  \begin{center}
    \begin{tabular}{l|c}
      \toprule % <-- Toprule here
      \textbf{Hyperparameter} & \textbf{Value}\\
      \midrule % <-- Midrule here
      Model Update frequency ($K_m$) & 250\\
      Ensemble Size ($K$) & 5\\
      Network Architecture & (200,200,200,200)\\
      Model Learning rate &  0.001\\
      \bottomrule % <-- Bottomrule here
    \end{tabular}
  \end{center}
  \caption{Dynamics Model Hyperparameters}
      \label{tab:loop-sac-hp}
\end{table}

\begin{table}[h!]
  \begin{center}
    
    \begin{tabular}{l|c}
      \toprule % <-- Toprule here
      \textbf{Hyperparamater} & \textbf{Value}\\
      \midrule % <-- Midrule here
      Planning Horizon ($H$) & 3\\
      Population Size ($N$) & 100\\
      Number of Particles ($P$) & 4\\
      Alpha ($\alpha$) & 0.1\\
      Iterations ($n_{ARC}$) & 5\\
      Mixture ratio ($\beta$) & 0.05\\
      Eta ($\eta$) & 1\\
      \bottomrule % <-- Bottomrule here
    \end{tabular}
  \end{center}
  \caption{Online RL: ARC Hyperparameters}
  \label{tab:cem_hyperparams}
\end{table}

\subsection{Offline RL}

\textbf{Baselines:} We reimplement the CRR baseline in Pytorch. For PLAS, we use the original implementation~\footnote{https://github.com/Wenxuan-Zhou/PLAS}. Note that LOOP requires terminal Q-functions which estimate the cumulative value of future rewards. Some offline RL methods such as CQL will not be suitable to be combined with LOOP because CQL estimates a conservative lower-bound of the Q-function~\cite{kumar2020conservative}. For MBOP~\cite{argenson2020model}, we report the results from their paper. 

\textbf{Training Details: }For both LOOP-CRR and LOOP-PLAS we use the provided hyperparameters in the original papers. To optimize for the H-step lookahead objective given in Eqn.~\ref{eq:H-step_objective}, we use ARC with 1 iteration of Iterative importance sampling and $\beta=1$ in the mixture prior. This is done to ensure that ARC trajectories are close to the actor trajectory distribution since the estimated Q-functions are only accurate within the data distribution. For each dataset, We perform an hyperparameter search over horizons $h$ - [2,4,10], pessimism parameter $\beta_{pess}$ - [0,0.5,1,5], exponential weighting temperature $1/\eta$ - [0.01,0.03,0.1,1,3,10] and noise standard deviation $\sigma$- [0.01,0.05,0.1]. We list the hyperparameters from the best experiments in Table~\ref{table:offline_rl_results}.

\begin{table*}[h]
    \vspace{2mm}
    \centering
    \footnotesize
    % \resizebox{\textwidth}{!}{%
    \begin{tabular}{c|c|cccc|cccc}
    \toprule
    \multirow{2}{*}{Dataset-type} &\multirow{2}{*}{Environments}  & \multicolumn{4}{c}{LOOP-CRR}& \multicolumn{4}{c}{LOOP-PLAS} \\
    & & $h$ & $1/\eta$ & $\beta_{pess}$  & $\sigma$& $h$ & $1/\eta$ & $\beta_{pess}$  & $\sigma$\\
    \midrule
    \multirow{3}{*}{random}& hopper&2&10.0&0.5&0.4&4&3&0.5&0.4\\
    & halfcheetah&2&0.01&5.0&0.4&2&1&0&0.01\\
    & walker2d&4&3.0&0.0&0.05&10&10&0&0.05\\
    \midrule
    \multirow{3}{*}{medium}& hopper&2&3.0&1.0&0.01&2&10&0.5&0.01\\
    & halfcheetah&2&0.01&0.0&0.01&2&0.01&1.0&0.01\\
    & walker2d&2&0.1&0.5&0.05&4&3.0&0&0.01\\
    \midrule
    \multirow{3}{*}{med-replay}& hopper&4&0.03&0.5&0.01&2&3.0&1.0&0.01\\
    & halfcheetah&2&1.0&0.5&0.01&2&0.03&5.0&0.01\\
    & walker2d&4&1.0&0.5&0.1&4&0.01&0.5&0.01\\
    \midrule
    \multirow{3}{*}{med-expert}& hopper &4&0.1&1.0&0.05&4&0.1&5.0&0.01\\
    & halfcheetah&2&0.01&5.0&0.01&2&0.01&0.5&0.05\\
    & walker2d&4&0.01&1.0&0.01&2&10.0&1.0&0.01\\
    \bottomrule
    \end{tabular}
\caption{Hyperparameters used in LOOP behavior policy during evaluation for Offline RL methods } 

%We highlight all the datasets where LOOP improves the performance by more than 5 percent. \dave{Bold all results in each row that are equivalently good}}
\label{table:offline_rl_ag_results}
\end{table*}

\subsection{Safe RL}
\label{ap:safe_rl}
\textbf{Details on the Environments: } For benchmarking safety environments we use the OpenAI safety gym environments~\cite{Ray2019}. We use a modified observation space for the agents where each agent observes its readings from velocimeter, magnetometer, and gyro sensors, LiDAR observations for the obstacles, and the goal location to a total of a 26-dimensional observation space. We also use an RC-car environment~\cite{ahn2019towards} in safe RL experiments shown in Figure~\ref{fig:safety_environments}. RC-car environment has a 6-dimensional observation space consisting of car's position and  rate of change of its position. It's action space comprises of throttle and steer command.
\begin{figure}[h]
\centering
\includegraphics[width=0.8\textwidth]{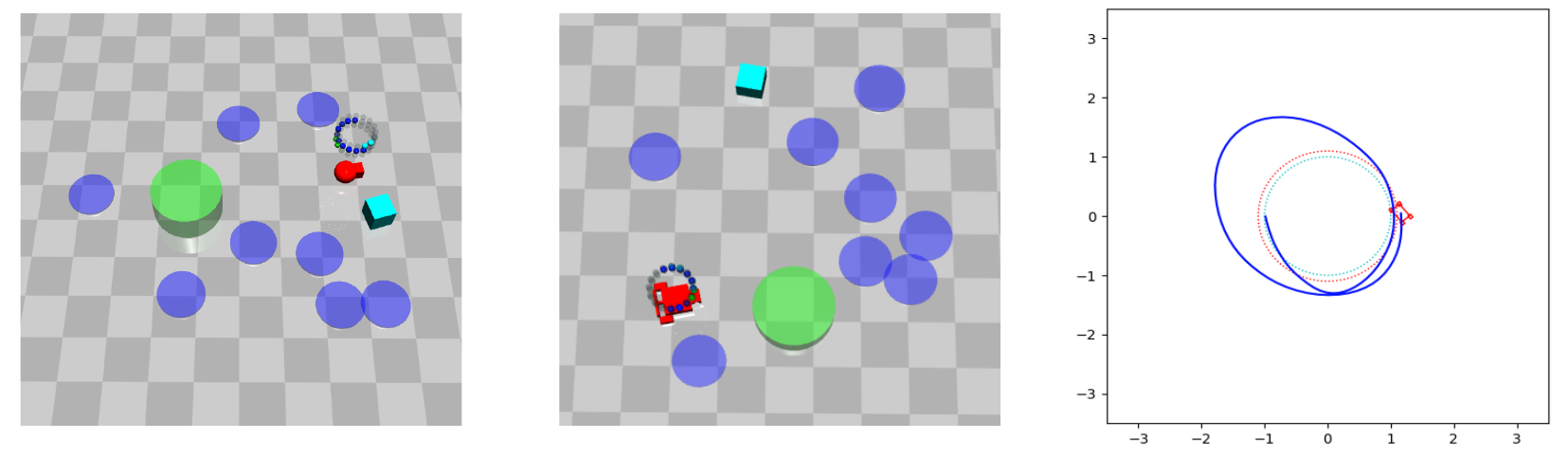}
\caption{Safety environments. Left to Right: PointGoal1, CarGoal1, Drift-v0}
\label{fig:safety_environments}
\end{figure}

\textbf{Baselines: } We compare against CPO~\cite{achiam2017constrained}, LBPO~\cite{sikchi2021lyapunov}, and PPO-lagrangian~\cite{altman1996constrained}. We use the original implementation for LBPO\footnote{https://github.com/hari-sikchi/LBPO} and the safety benchmark~\cite{Ray2019} for CPO and PPO-lagrangian. 
%CPO and LBPO are projection-style methods that ensures that the policy stays in the feasible set. They also have recovery behaviors to ensure that the policy becomes safe if the current policy is unsafe. PPO-lagrangian is a dual gradient descent method that does not enforces safety at each policy update, instead achieving a safe policy only asymptotically. 
All of the three baselines require a threshold to be set in order to optimize for safety. 
SafeLOOP optimizes for in-horizon safety whereas the baselines optimize for the infinite-step cumulative discounted return, so it becomes difficult to compare the methods directly. 
%We aim to show in our experiments that safeLOOP can achieve reasonable safety performance with high gains in sample efficiency. 
We design safeLOOP to optimize for 0 cost within the planning horizon and use the asymptotic safety cost reached by safeLOOP as the threshold for the baselines. We see that safeLOOP can reach average infinite horizon cost less than 10 which is lower than the threshold of 25 used in the official benchmark. 

% Second, we see in Figure~\ref{fig:openai_safety} that, comparing to the time to reach the same safety threshold as the baselines, safeLOOP is orders of magnitude more sample efficient and learns a safe policy more quickly. 

\textbf{Training details: } We use the safeARC algorithm presented in Algorithm~\ref{alg:safeARC} to solve the constrained optimization objective in Eqn.~\ref{eq:safe_loop}. The ARC parameters are the same as given in Table~\ref{tab:cem_hyperparams} with the Iterations(N) changed to 8 and Planning horizon(H) changed to 8. For OpenAI safety environments we use an action repeat of 5 across our method and the baselines.

\newpage
\section{Additional Experiments}
\label{ap:additional_experiments}

\subsection{Online RL experiments for additional environments}
\label{ap:online_rl_exp}
Figure~\ref{fig:loop_additional_main} shows the comparison of LOOP-SAC with baselines on additional tasks InvertedPendulum-v2, Swimmer, Hopper-v2 and TruncatedHumanoid-v2. LOOP-SAC is significantly more sample efficient  than SAC as we observed in Figure~\ref{fig:loop_main}. PETS-restricted has poor performance due to planning over a limited horizon. LOOP-SAC outperforms SAC-VE and is competitive to MBPO, except in Humanoid-v2 where MBPO outperforms. LOOP-SARSA has a poor performance across environments.

\vspace{3mm}
\begin{figure}[h]
\begin{center}
    \includegraphics[width=1.0\linewidth]{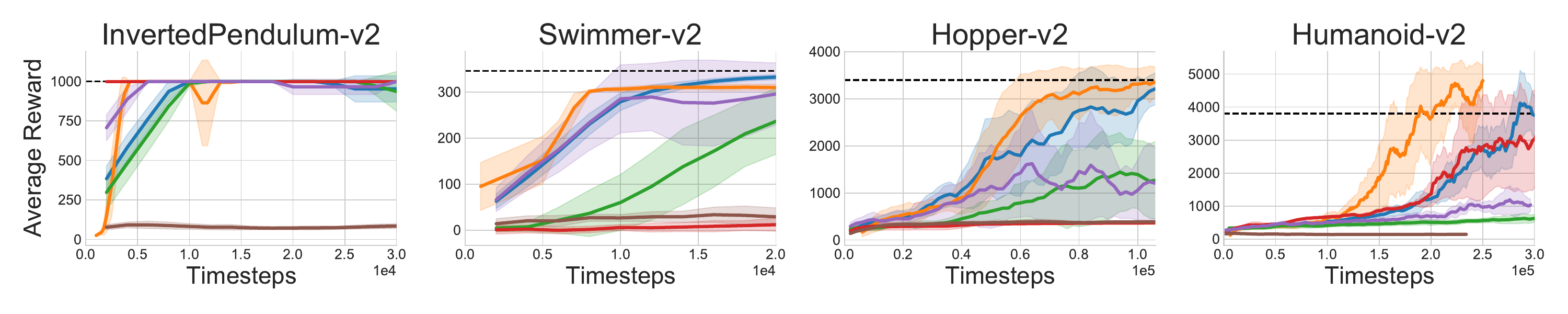}
      \\ 
    \includegraphics[width=1.0\linewidth]{figures/loop_main/loop4_pdf_legend.pdf}  
    \vspace{-7.2mm}
\end{center}
\caption{Comparisons of LOOP and the baselines for online RL for InvertedPendulum-v2, Swimmer, Hopper-v2 and TruncatedHumanoid-v2. LOOP-SAC is significantly more sample efficient than SAC. The dashed line indicates the performance of SAC at 1 million timesteps. }
\label{fig:loop_additional_main}
\end{figure}
\vspace{5mm}

\subsection{Comparison to modified POLO}
\label{ap:polo_experiments}

\reb{In this section, we compare against POLO for Claw-v0 and HalfCheetah-v0 in Figure 9. The author's implementation of POLO is unavailable so we tried our best to implement it. To have a fair comparison with LOOP, we keep the hyperparameters as close to LOOP as possible and use a learned model; since the code for POLO is not available, we are unsure what hyperparameters were used in the original experiments. The performance of POLO is pretty low compared to LOOP, potentially due to the limited computation of CEM used for the value function update. Potentially the performance of POLO would be better with much larger computational resources than what we have available.}

\reb{We would also like to highlight the difference in computational efficiency in LOOP and POLO. POLO requires an additional trajectory optimization procedure for value function computation, which is very computationally expensive. In contrast, LOOP learns a parameterized policy and value function to make the value function computation significantly faster.  In addition, normally for online planning, we can warm start from the results from the previous time step (``amortization"); LOOP and PETS~\cite{chua2018deep} take advantage of this optimization.  In contrast, POLO cannot take advantage of this amortization during optimization for the value computation because we sample states IID from the replay buffer. Our implementation of POLO (after reasonable optimizations) takes $\approx$84 hours for 100k steps of HalfCheetah on a single NVIDIA 1080 GPU whereas LOOP takes $\approx$7 hours (12x less computation) while taking $\approx$1/5 the memory consumption of POLO.}

\begin{figure}[h]
\begin{center}
      \includegraphics[width=0.6\linewidth]{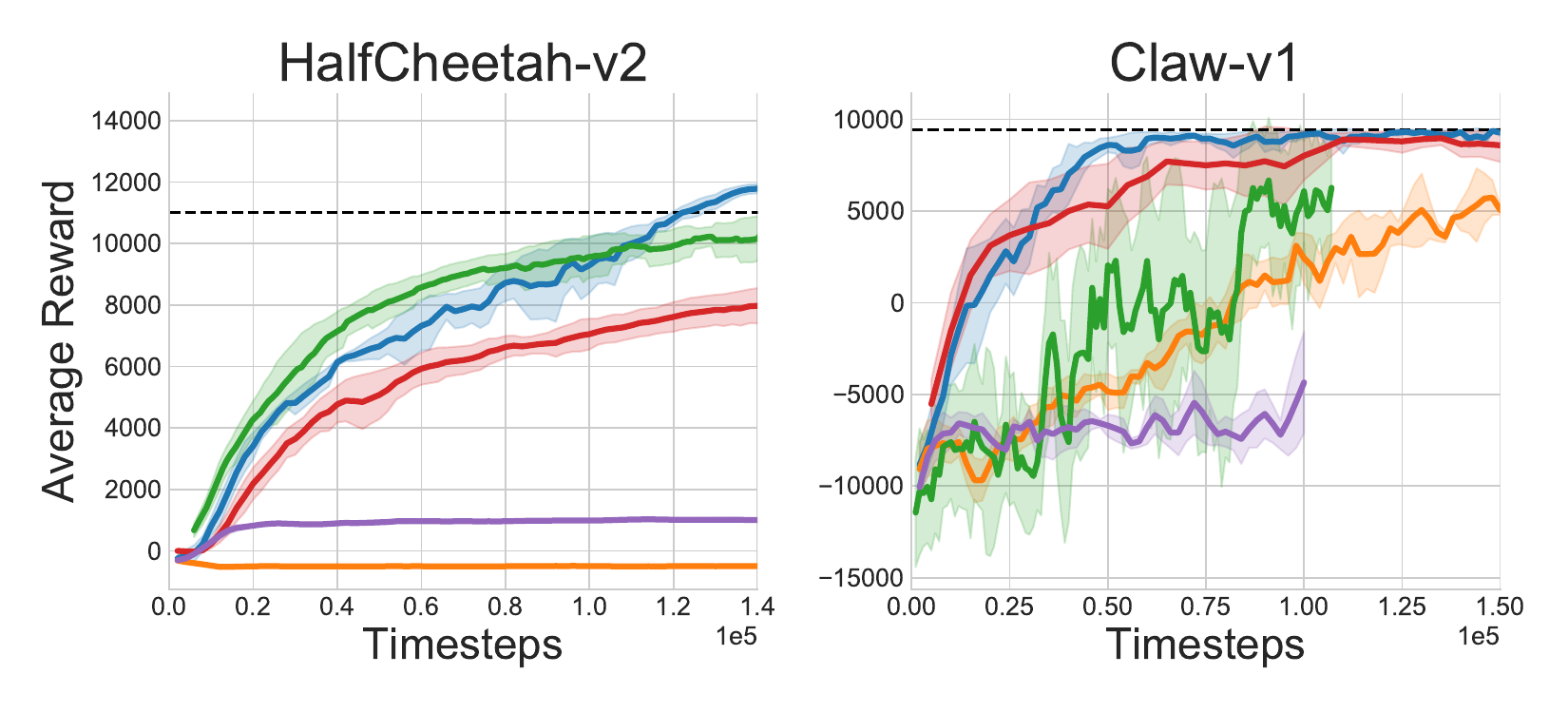}
      \\ 
    \includegraphics[width=1.0\linewidth]{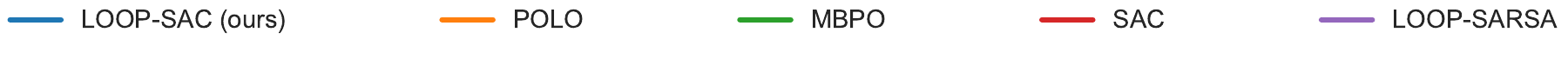}  
    \vspace{-7.2mm}
\end{center}
\caption{Comparisons of LOOP with modified-POLO for online RL for HalfCheetah-v2 and Claw-v1. POLO demonstrates poor performance which might be attributed to one of the reasons mentioned above. }
\label{fig:polo_additional_main}
\end{figure}

\subsection{Offline RL experiments for D4RL}
\label{ap:offline_rl_exp}
Table~\ref{table:complete_offline_rl_results} shows the performance of LOOP on four types of D4RL 
locomotion datasets. The random dataset is generated by a randomly initialized policy. The medium dataset is generated by executing a ``medium quality'' policy trained up to half of the final performance at convergence. The medium-replay dataset is the replay buffer of the medium quality policy. The medium-expert dataset is generated by a medium quality policy and a fully trained policy.

\begin{table*}[h]
    \vspace{2mm}
    \centering
    \footnotesize
    % \resizebox{\textwidth}{!}{%
    \begin{tabular}{c|c|c|c|c|c|c|c|c}
    \toprule
    \multirow{2}{*}{Dataset} & Env & CRR & LOOP & Improve\% & PLAS & LOOP & Improve\%& MBOP \\
    &&&CRR&&&PLAS&\\
    \midrule
    \multirow{3}{*}{random}& hopper&10.40&10.68&2.7&10.35&10.71&3.5&\textbf{10.8}\\
    & halfcheetah&4.23&7.55&78.5&26.05&\textbf{26.14}&0.3&6.3\\
    & walker2d&1.94&2.04&5.2&0.89&2.83&218.0&\textbf{8.1}\\
    \midrule
    \multirow{3}{*}{medium}& hopper&65.73&\textbf{85.83}&30.6&32.08&56.47&76.0&48.8\\
    & halfcheetah&41.14&41.54&1.0&39.33&39.54&0.5&\textbf{44.6}\\
    & walker2d&69.98&\textbf{79.18}&13.1&46.20&52.66&14.0& 41.0\\
    \midrule
    \multirow{3}{*}{med-replay}& hopper&27.69&29.08&5.0&29.29&\textbf{31.29}&6.8& 12.4\\
    & halfcheetah&42.29&42.84&1.3&43.96&\textbf{44.25}&0.7&42.3\\
    & walker2d&19.84&27.30&37.6&35.59&\textbf{41.16}&15.7&9.7\\
    \midrule
    \multirow{3}{*}{med-expert}& hopper&112.02&113.71&1.5&110.95&\textbf{114.32}&3.0&55.1\\
    & halfcheetah&21.48&24.19&12.6&93.08&98.16&5.5&\textbf{105.9}\\
    & walker2d&103.77&\textbf{105.76}&1.9&90.07&99.03&9.9&70.2\\
    \bottomrule
    \end{tabular}
\caption{Normalized scores for LOOP on the D4RL datasets comparing to the underlying offline RL algorithms and a baseline MBOP. LOOP improves the base algorithm across various types of datasets and environments. } 
\label{table:complete_offline_rl_results}
\end{table*}

\reb{\subsection{Pessimism ablation for Offline RL}}

\reb{Table~\ref{table:pessimism_offline_rl_results} shows an ablation of the pessimism term $\beta_{pess}$ in Eqn.~\ref{eq:offline_loop} as used in LOOP for Offline RL experiments. We note that the pessimistic term is not itself one of our contributions; this pessimistic term was used in previous works in model-based offline RL like~\cite{yu2020mopo,kidambi2020morel} which learn a policy given the data in an uncertainty penalized MDP. We observe that being pessimistic allows us to control incorrect extrapolation and obtain higher returns in most of the environments. }

\begin{table*}[h]
    \vspace{2mm}
    \centering
    \footnotesize
    % \resizebox{\textwidth}{!}{%
    \begin{tabular}{c|c|c|c|c|c|c|c}
    \toprule
    \multirow{3}{*}{Dataset} & Env & LOOP & LOOP & $\beta^*$ & LOOP & LOOP & $\beta^*$ \\
    &&CRR&CRR&&PLAS&PLAS&\\
    &&($\beta=0$)&($\beta=\beta^*$)&&($\beta=0$)&($\beta=\beta^*$)&\\
    \midrule
    \multirow{3}{*}{random}& hopper&10.31&\textbf{10.68}&0.5&10.67&\textbf{10.71}&0.5\\
    & halfcheetah&5.12&\textbf{7.55}&5.0&\textbf{26.14}&26.14&0.0\\
    & walker2d&\textbf{2.04}&2.04&0.0&\textbf{2.83}&2.83&0.0\\
    \midrule
    \multirow{3}{*}{medium}& hopper&78.56&\textbf{85.83}&1.0&54.97&\textbf{56.47}&0.5\\
    & halfcheetah&\textbf{41.54}&41.54&0.0&38.01&\textbf{39.54}&1.0\\
    & walker2d&75.21&\textbf{79.18}&0.5&\textbf{52.66}&52.66&0.0\\
    \midrule
    \multirow{3}{*}{med-replay}& hopper&28.28&\textbf{29.08}&0.5&31.08&\textbf{31.29}&1.0\\
    & halfcheetah&42.71&\textbf{42.84}&0.5&44.01&\textbf{44.25}&5.0\\
    & walker2d&23.17&\textbf{27.30}&0.5&32.99&\textbf{41.16}&0.5\\
    \midrule
    \multirow{3}{*}{med-expert}& hopper&104.57&\textbf{113.71}&1.0&98.87&\textbf{114.32}&5.0\\
    & halfcheetah&23.84&\textbf{24.19}&5.0&94.19&\textbf{98.16}&0.5\\
    & walker2d&104.57&\textbf{105.76}&1.0&97.87&\textbf{99.03}&1.0\\
    \bottomrule
    \end{tabular}
\caption{Normalized scores for LOOP on the D4RL datasets ablating the pessimism parameter.} 
\label{table:pessimism_offline_rl_results}
\end{table*}

\subsection{Empirical analysis for ARC}
\label{ap:arc_experiments}

In this section, we aim to verify how the ARC and its specfic hyperparameters affect the performance of LOOP for both the online RL and offline RL settings.

\subsubsection{Ablation Study on Actor Regularized Control}

In this experiment, we compare the performance of LOOP with ARC to a variant of LOOP which optimize Eqn.~\ref{eq:H-step_objective} without any constraint using CEM in the Online RL setting. CEM starts the optimization from the mean action sequence from the previous environment time step. It does not include actions proposed by the parameterized actor in the population. During training, we measure the actor-divergence defined to be the $L_2$ distance between the proposed action means of the parameterized actor and the CEM output.

The results are shown in Figure~\ref{fig:loop_ag_performance}. The training process sometimes become unstable in the absence of ARC. We also observe that ARC empirically reduces actor-divergence during training. 

\begin{figure}[h]
\begin{center}
      \includegraphics[width=1.0\linewidth]{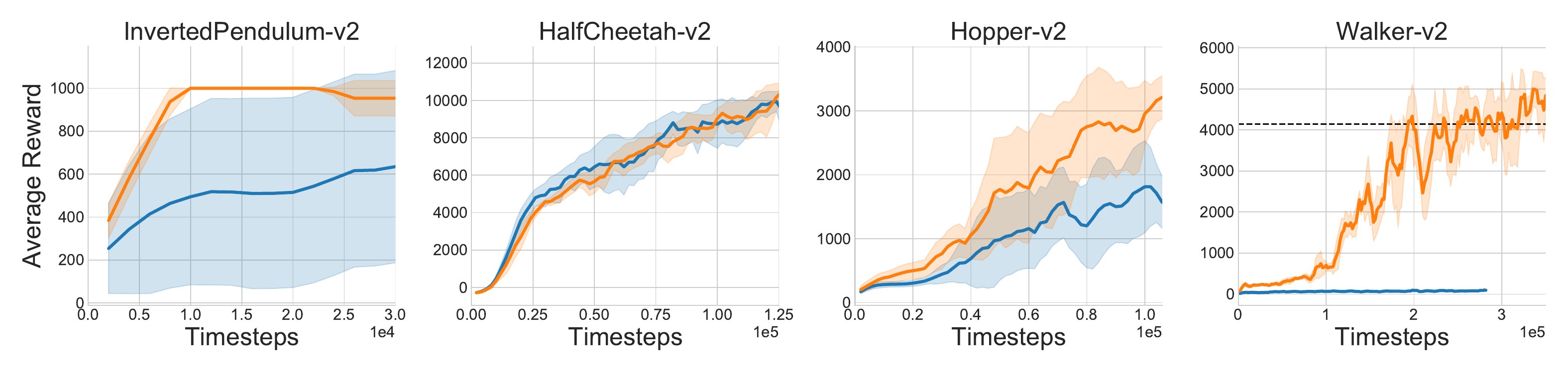}\\ 
    \includegraphics[width=1.0\linewidth]{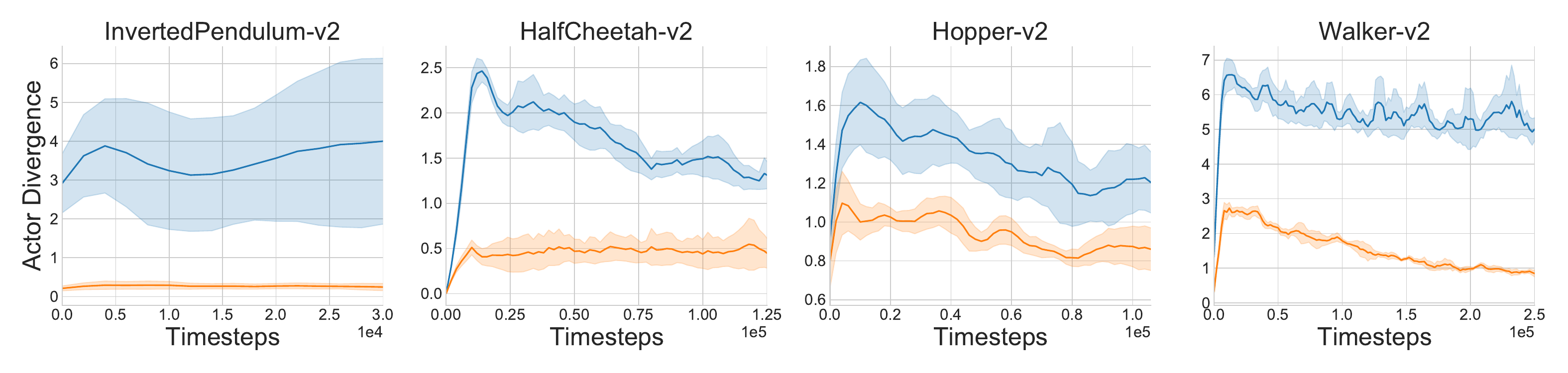}\\ 
    \includegraphics[width=0.6\linewidth]{figures/ablation_main_paper/ag_main_pdf_legend.pdf}
\end{center}
\caption{Top: We illustrate the effect of ARC on the performance for online-RL. Without ARC, the performance is worse and the training becomes unstable. Bottom: We illustrate that ARC effectively reduces the actor-divergence between the H-step lookahead policy and the parameterized actor.}
\label{fig:loop_ag_performance}
\end{figure}
\vspace{0.5cm}

\subsubsection{ARC runtime}

ARC runs at 14.3 Hz for the HalfCheetah-v1 environment with the hyperparameters specified in Table~\ref{tab:cem_hyperparams} on a machine with Intel(R) Xeon(R) CPU E5-2620 v4 @ 2.10GHz and NVIDIA GeForce GTX 1080 Ti with a GPU memory consumption of ~1500 MB.

\subsubsection{ Effect of $\beta$ in ARC for Online RL}

We see in Section~\ref{sec:actor_divergence} that ARC uses a mixture distribution of actor and 1-step shifted output from the previous timestep as the prior given by: 
\begin{equation*}
    p^\tau_{prior}=\beta \pi_\theta+(1-\beta)\mathcal{N}(\mu_{t-1},\sigma)
\end{equation*}
where $\beta$ is the mixture coefficient.

In this experiment, we compare ARC with different parameters of beta for online RL. We observe empirically in Figure~\ref{fig:loop_ag_ar} that ARC with $\beta<1$ is more suitable to online RL as it is less restrictive and allows for a greater improvement on the parametric actor. 
\begin{figure}[h]
\begin{center}
      \includegraphics[width=0.8\linewidth]{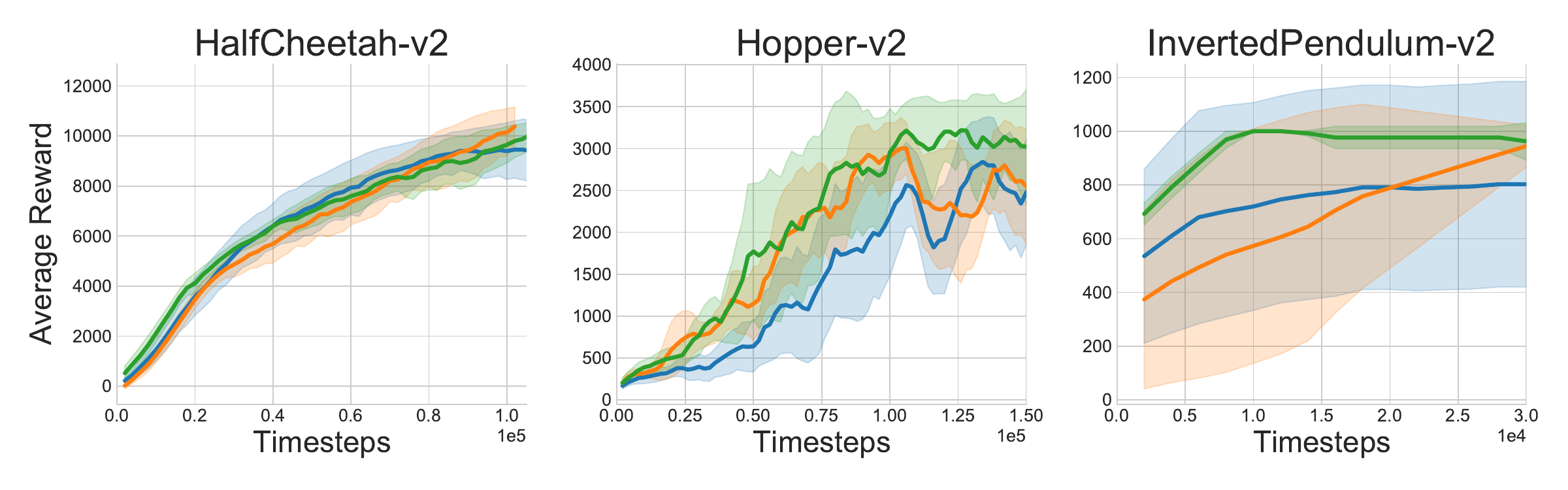}\\ 
    \includegraphics[width=0.8\linewidth]{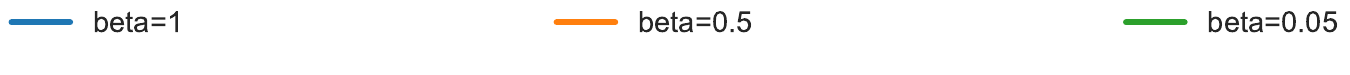}  
\end{center}
\caption{Effect of $\beta$ in Online RL experiments. A high $\beta$ constrains output actions to be close to actor and can restrict policy improvement.}
\label{fig:loop_ag_ar}
\end{figure}

\subsubsection{Effect of $\beta$ in ARC for in Offline RL}

 We use ARC with 1 importance sampling iteration for offline RL. In the following experiment, we compare ARC over 1 importance sampling iteration with $\beta=1$ against ARC over 5 iterations with $\beta=0.05$. The latter version utilizes the solution obtained by ARC in previous timestep which may potentially select trajectories that lead to out-of-distribution states with overestimated value. We see in Table~\ref{tab:loop_offline_ag} that offline RL results for LOOP using ARC with $\beta=0.05$ has much worse performance than ARC with $\beta=1$ and also performs worse than the underlying offline RL method.

\begin{table}[h]
    \vspace{2mm}
    \centering
    \footnotesize
    % \resizebox{\textwidth}{!}{%
    \begin{tabular}{c|c|c|c|c|c|c|c}
    \toprule
    \multirow{2}{*}{Dataset} & Env & CRR & LOOP-CRR & LOOP-CRR  & PLAS & LOOP-PLAS & LOOP-PLAS \\
    &&&($\beta$=0.05)&($\beta$=1.0)&&($\beta$=0.05)&($\beta$=1.0)\\
    \midrule
    \multirow{3}{*}{random}& hopper&10.40&7.50&10.68&10.35&7.66&10.71\\
    & halfcheetah&4.23&2.32&7.55&26.05&5.21&26.14\\
    & walker2d&1.94&2.43&2.04&0.89&1.24&2.83\\
    \midrule
    \multirow{3}{*}{medium}& hopper&65.73&15.02&85.83&32.08&9.64&56.47\\
    & halfcheetah&41.14&3.09&41.54&39.33&2.99&39.54\\
    & walker2d&69.98&6.02&79.18&46.20&4.25&52.66\\
    \midrule
    \multirow{3}{*}{med-replay}& hopper&27.69&8.78&29.08&29.29&6.8&31.29\\
    & halfcheetah&42.29&3.10&42.84&43.96&4.68&44.25\\
    & walker2d&19.84&6.02&27.30&35.59&6.89&41.16\\
    \midrule
    \multirow{3}{*}{med-expert}& hopper&112.02&8.78&113.71&110.95&7.48&114.32\\
    & halfcheetah&21.48&3.09&24.19&93.08&4.10&98.16\\
    & walker2d&103.77&6.01&105.76&90.07&3.01&99.03\\
    \bottomrule
    \end{tabular}
    \vspace{2mm}
\caption{Effect of $\beta$ in  offline-LOOP for the offline RL experiments. A low $\beta$ can potentially select trajectories with overestimated returns.} 
\label{tab:loop_offline_ag}
\end{table}

\subsection{Benefits of deploying H-step lookahead in Online RL}

%\subsubsection{LOOP-MOPO vs MOPO online}
In LOOP we use a H-step lookahead policy for both exploration and evaluation. In this experiment, we run the Online RL experiments with LOOP only used for evaluation but not for exploration similar to the Offline RL experiments. This baseline is named SAC-evalLOOP. From Figure~\ref{fig:loop_sac_online}, LOOP-SAC outperforms both SAC-evalLOOP and SAC, which shows the benefits of H-step lookahead in LOOP during training-time deployment.

\begin{figure}[h]
\begin{center}
      \includegraphics[width=0.8\linewidth]{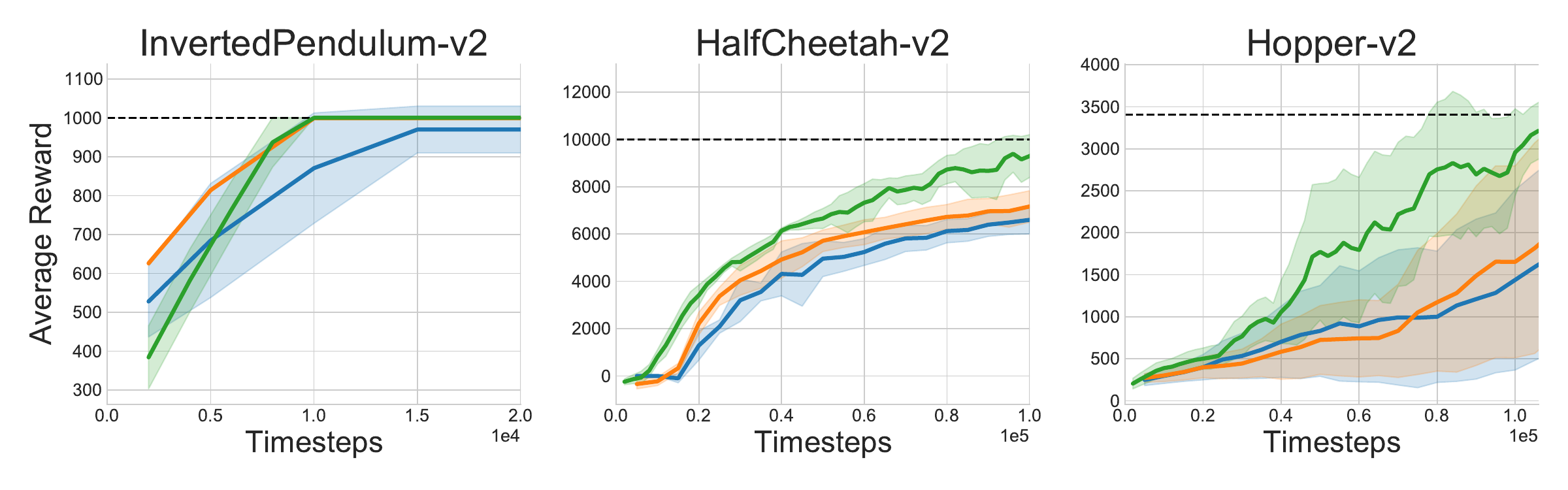}\\ 
    \includegraphics[width=0.8\linewidth]{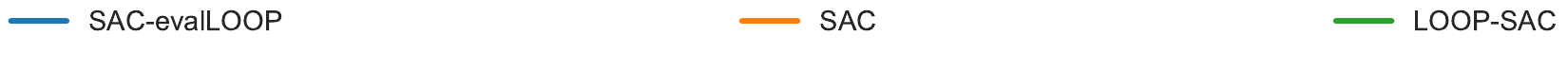}  
\end{center}
\caption{We compare the performance of LOOP-SAC to SAC-Online and SAC-Online-LOOPeval. LOOP-SAC outperforms both baselines and suggests that LOOP benefits from the H-step lookahead policy during training-time deployment.}
\label{fig:loop_sac_online}
\end{figure}

\subsection{Using Offline RL Algorithms with LOOP for Online RL}
\label{ap:offline_rl_in_loop}
In Section~\ref{sec:actor_divergence}, we mentioned that naively combining H-step lookahead policy with an off-policy algorithm will lead to the Actor Divergence issue. One potential solution we have considered besides ARC is to use an Offline RL algorithm as the underlying off-policy algorithm. Offline RL algorithm are designed to train a policy over a static dataset that is not collected by the parameterized actor which in principle should mitigate the instability issues of the value function learning caused by Actor Divergence. Note that in this case we are considering an Online RL \textit{problem setting} with the help of Offline RL \textit{algorithms}.

We investigate a combination of LOOP with Offline RL methods MOPO~\cite{yu2020mopo} and CRR~\cite{wang2020critic}. We reimplement MOPO in PyTorch (originally in Tensorflow) for compatibility with other modules of LOOP. We also modify the dynamics model activations from Swish to ReLU. We use the same CRR implementation as the Offline RL experiments discussed above.
We adapt MOPO and CRR to the Online RL setting by updating the policy and the value function for 20 gradient updates for each environment timestep. From Figure~\ref{fig:loop_offline}, LOOP-SAC has the most consistent performance across all the environments. LOOP-CRR and LOOP-MOPO work well in some cases but are significantly worse than LOOP-SAC in the others.

\vspace{5mm}
\begin{figure}[h]
\begin{center}
      \includegraphics[width=1.0\linewidth]{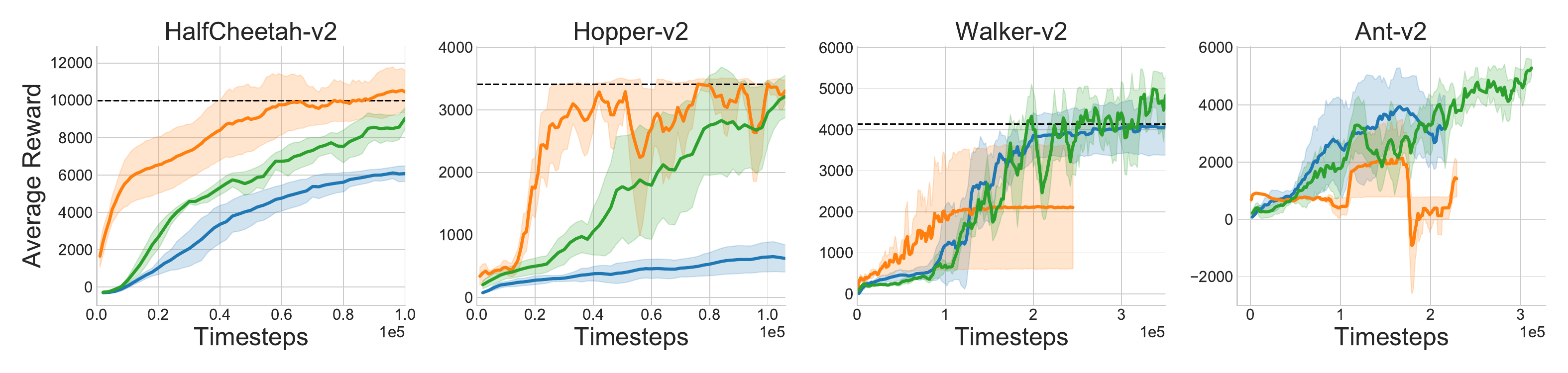}\\ 
    \includegraphics[width=0.8\linewidth]{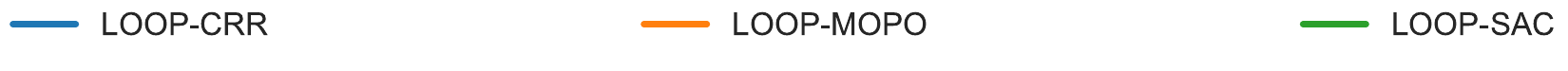}  
\end{center}
\caption{Using offline RL methods like CRR (model-free) or MOPO (model-based) with LOOP does not lead to consistently better performance.}
\label{fig:loop_offline}
\end{figure}

\vspace{5mm}
\begin{table}[h!]
  \begin{center}
    \begin{tabular}{|l|c|c|c|}
      \toprule % <-- Toprule here
      Model Update frequency ($K_m$) & \multicolumn{3}{c|}{250}\\
      \midrule
      Ensemble Size & \multicolumn{3}{c|}{5}\\
      \midrule
      Network Architecture & \multicolumn{3}{c|}{MLP with 4 hidden layers of size 200}\\
      \midrule
      Model Horizon (H) & \multicolumn{3}{c|}{3}\\
      \midrule
      Model Learning rate &  \multicolumn{3}{c|}{0.001}\\
      \midrule
      Policy update per environment step (R) &  \multicolumn{3}{c|}{20}\\
      \midrule
      Replay Buffer Size &  \multicolumn{3}{c|}{1e6}\\
      \midrule
      Gradient updates per timestep(R) & \multicolumn{3}{c|}{20}\\
      \midrule
      Pessimism parameter($\lambda$) & \multicolumn{3}{c|}{1}\\
      \midrule
      Model rollout length & \multicolumn{3}{c|}{1}\\
      \bottomrule % <-- Bottomrule here
    \end{tabular}
  \end{center}
  \caption{LOOP-MOPO Hyperparameters}
      \label{tab:loop-mopo-hp}
\end{table}

\end{document}